\documentclass[sigconf]{acmart}
\usepackage[utf8]{inputenc}
\usepackage{array} 
\usepackage{multirow} 
\usepackage{rotating} 
\usepackage{enumitem}
\usepackage{placeins}  
\usepackage{xcolor}
\usepackage{fontawesome5}
\usepackage[table,xcdraw]{xcolor}
\usepackage[font=small,skip=2pt]{caption}
\usepackage{amsmath}
\usepackage{adjustbox}
\usepackage{footnote}
\usepackage{graphicx}
\usepackage{algorithm}
\usepackage{algorithmic}
\newtheorem{lemma}{Lemma}
\newtheorem{theorem}{Theorem}
\newtheorem{assumption}{Assumption}
\newtheorem{corollary}{Corollary}

\usepackage{subcaption}
\usepackage{float}  
\usepackage{nicematrix}
\usepackage{booktabs,multirow}
\usepackage{tabularray}
\usepackage{collcell}
\usepackage{makecell}
\usepackage{bm}
\usepackage{tabularx}
\AtBeginDocument{%
  }

\acmConference[CIKM '26]
  {The 35th ACM International Conference on Information and Knowledge Management}
  {November 7--11, 2026}
  {Rome, Italy}

\setcopyright{none}
\makeatletter
\let\originalcopyrightpermissionfootnote\footnotetextcopyrightpermission
\renewcommand{\footnotetextcopyrightpermission}[1]{%
  \originalcopyrightpermissionfootnote{%
    \textit{Accepted at the 35th ACM International Conference on Information
    and Knowledge Management (CIKM '26), Rome, Italy.}}%
}
\makeatother

\begin{document}

\title[Retrieval--Corrected Conformal Prediction for Time Series]{Retrieval--Corrected Conformal Prediction for Time Series}

\author{Sangjin Jin}
\email{sj.jin@unist.ac.kr}
\affiliation{
  \institution{Ulsan National Institute of Science and Technology}
  \city{Ulsan}
  \country{Republic of Korea}
}

\author{Kangmin Kim}
\email{kmin1015@unist.ac.kr}
\affiliation{
  \institution{Ulsan National Institute of Science and Technology}
  \city{Ulsan}
  \country{Republic of Korea}
}

\author{Junhyeong Lee}
\email{jun.lee@unist.ac.kr}
\affiliation{
  \institution{Ulsan National Institute of Science and Technology}
  \city{Ulsan}
  \country{Republic of Korea}
}

\author{Yongjae Lee}
\authornote{Corresponding author.}
\email{yongjaelee@unist.ac.kr}
\affiliation{
  \institution{Ulsan National Institute of Science and Technology}
  \city{Ulsan}
  \country{Republic of Korea}
}
\affiliation{
  \institution{LinqAlpha}
  \city{New York}
  \state{NY}
  \country{USA}
}

\vspace{-5pt}
\keywords{Conformal prediction, Time series forecasting, Information retrieval, Uncertainty quantification}

\setlength{\textfloatsep}{3pt plus 0pt minus 1pt}
\setlength{\intextsep}{4pt plus 0pt minus 1pt}

\begin{abstract}

Conformal prediction (CP) provides distribution-free prediction intervals for fixed forecasters, but its standard calibration procedure is often inefficient for time series data, where forecast errors are temporally dependent and change across time and operating conditions. Recent time series CP methods improve local calibration using recent, weighted, or localized residuals. Yet local calibration can remain indirect, since broad residual weighting or additional adaptation procedures may dilute the evidence most relevant to the current prediction. This motivates a simple retrieval and correction strategy that selects similar past residuals as local evidence and then corrects the coverage error left by retrieval. In this paper, we propose \emph{Retrieval--Corrected Conformal Prediction} (RCCP), a retrieval-augmented calibration method for time series prediction intervals. RCCP builds an asymmetric interval from retrieved one-sided residuals and calibrates its normalized retrieval error with a scalar conformal correction. Thus, retrieval provides local residual evidence, while conformal correction determines the final scale needed for coverage. We provide a coverage-gap bound based on the stability of the normalized retrieval error distribution. Across standard benchmarks and backbone forecasters, RCCP attains the target coverage in every setting and achieves the lowest Winkler scores, with fewer severe misses. RCCP also achieves low calibration and inference overhead, showing that retrieval-corrected calibration is an effective and scalable approach to uncertainty quantification in time series forecasting. Code is available at \href{https://github.com/jinsaaang/rccp}{https://github.com/jinsaaang/rccp}.

\end{abstract}


\ccsdesc[500]{Computing methodologies~Artificial intelligence}

\maketitle

\section{Introduction}

Time series forecasting plays a central role in decision making across domains such as finance, traffic, and supply chain systems~\cite{xiang2022temporal,tang2022domain,chopra2019supply}. Modern deep forecasting models have improved point prediction accuracy across these settings~\cite{lim2021time,benidis2022deep,kim2025comprehensive}, yet point accuracy alone is insufficient for decision making under uncertainty. Forecast users also need prediction intervals that quantify plausible deviations from the point forecast and attain the desired empirical coverage. An informative interval should be narrow when the local error scale is small, wider when current conditions imply larger errors, and calibrated to the target level. Probabilistic forecasting approaches estimate predictive distributions or conditional quantiles~\cite{salinas2020deepar,lim2021temporal}, but their uncertainty estimates depend on the model class, training objective, and distributional assumptions. When these assumptions are misspecified or the test distribution differs from training, nominal uncertainty may not translate into reliable empirical coverage. This motivates predictive inference based on calibration for fixed forecasters.

Conformal prediction (CP) provides a distribution free framework for constructing prediction intervals with a prescribed coverage level~\cite{vovk2005algorithmic,lei2018distribution,angelopoulos2023conformal}. For a fixed regression forecaster, split CP computes nonconformity scores on a calibration set, often using absolute prediction residuals, and takes an empirical quantile of these scores as the interval threshold. Under exchangeability of calibration and test scores, this procedure attains finite sample marginal coverage. Its appeal is that coverage is obtained from calibration scores rather than a correctly specified predictive distribution. In time series, however, the residual process is generally nonexchangeable. Forecast errors are temporally dependent, heteroskedastic, and affected by nonstationarity, distribution shifts, and horizon dependent uncertainty~\cite{enbpi,aci,zaffran2022adaptive,lin2022conformal,spci}. A global threshold computed from all calibration residuals therefore mixes errors from different temporal conditions. It can be conservative in stable periods and insufficient in volatile periods, which weakens the coverage and efficiency tradeoff required of prediction intervals.

Recent time series CP methods address this mismatch by adapting calibration to the current prediction context~\cite{nexcp}. They either update calibration through recent errors or miscoverage feedback~\cite{aci,zaffran2022adaptive,angelopoulos2023pid,gibbs2024conformal}, or construct weighted and localized empirical distributions over conformity scores using temporal proximity, learned similarity, semantic features, or state representations~\cite{tibshirani2019conformal,hopcpt,ctssf,kowcpi,corel,rescp}. These methods avoid calibrating against the average residual behavior of the entire calibration period. This direction is essential for sharp intervals, since calibration should rely on residual information that reflects the current error distribution. A remaining limitation is that local thresholds are often obtained by softly weighting many calibration samples or by introducing an additional adaptation procedure. When only a smaller set of past errors is relevant to the current prediction, weakly related residuals can still affect the interval and reduce sharpness. Retrieval provides a more direct way to use residual evidence from similar past prediction contexts, but the resulting interval is not calibrated by itself. The key problem is therefore how to use retrieved residuals for a sharp local interval while correcting the coverage error that remains after retrieval.

Motivated by this problem, we propose \emph{Retrieval--Corrected Conformal Prediction} (RCCP). The main idea is to separate residual retrieval from conformal correction. Retrieval augmented time series forecasting has shown that similar historical windows or reference trajectories can provide useful nonparametric information for prediction~\cite{retime,ratd,raft,timerag,tsrag}. RCCP uses this principle for uncertainty estimation rather than point forecasting. Given a fixed forecaster, RCCP retrieves similar past prediction contexts and their realized residuals from a time ordered knowledge base, then builds an asymmetric local interval from positive and negative residuals. This interval captures local residual behavior without training an additional weighting model or relying on a global residual quantile.

The retrieved interval is constructed from data dependent local evidence, so RCCP does not treat it as the final conformal interval. It instead evaluates the remaining retrieval error on calibration data by comparing the realized residual with the retrieved upper or lower scale. The resulting normalized retrieval error is calibrated into a scalar correction factor for test time intervals. As a result, retrieval determines the local asymmetric shape of the interval, while conformal correction determines the final scale needed for coverage. Section~\ref{sec:retrieval_correction} formalizes this separation by first constructing retrieved residual quantiles and then correcting them through the normalized retrieval error.

\vspace{3pt}
Our main contributions are summarized as follows.
\vspace{-6pt}
\begin{itemize}
    \item We apply residual retrieval to conformal prediction as a direct alternative to broad weighting over calibration samples and additional training procedures, selecting residuals from similar past prediction contexts as local evidence.
    \item We propose RCCP, a framework that combines asymmetric retrieved intervals with scalar conformal correction.
    \item Theoretical results establish a coverage gap bound and an asymptotic coverage result for the RCCP interval, based on stability of the normalized retrieval error distribution.
    \item Empirical results show that RCCP improves the coverage and interval efficiency tradeoff with fewer large misses while keeping calibration cost low.
\end{itemize}



%


\section{Related Work}

\paragraph{\textbf{Conformal prediction for time series forecasting.}}
Conformal prediction provides a general framework for constructing prediction sets with coverage guarantees under exchangeability, and has been widely studied as a model-agnostic approach to uncertainty quantification \citep{vovk2005algorithmic, shafer2008tutorial, lei2018distribution, angelopoulos2023conformal}. 
However, time series forecasting violates exchangeability assumption due to temporal dependency, heteroskedasticity, and distribution shifts. 
Prior sequential conformal prediction for time series forecasting address this issue through residual-set updates~\citep{enbpi}, adaptive miscoverage control~\citep{aci}, or future residual-quantile modeling~\citep{spci}, including EnbPI, ACI, and SPCI. 
Another branch of time series conformal methods localizes calibration around
the current test point. NexCP~\citep{nexcp} follows this view by constructing a
test-dependent weighted residual distribution, assigning larger influence to
temporally closer available errors when computing the conformal quantile.
Recent local methods further improve efficiency by selecting calibration residuals according to their relevance to the current forecasting context, rather than relying only on temporal proximity. HopCPT~\citep{hopcpt} uses Modern Hopfield Networks for similarity-based sample reweighting, ResCP~\citep{rescp} uses reservoir-state similarity to reweight residuals, KOWCPI~\cite{kowcpi} adapts Reweighted Nadaraya--Watson estimation for weighted quantile regression on dependent scores, and CT-SSF~\cite{ctssf} learns dynamic weights in a semantic feature space. These methods demonstrate the value of local calibration, but they largely focus on estimating a test-specific conformal quantile. Consequently, their coverage behavior may become sensitive to the quality and effective support of the retrieved or weighted calibration examples.

\paragraph{\textbf{Retrieval-augmented time series forecasting.}}
In recent years, retrieval-augmented generation (RAG) has been widely used in language
models to complement parametric knowledge with relevant external context
retrieved at inference time~\cite{lewis2020retrieval, fan2024survey, ram2023context}. Motivated by this paradigm, retrieval
augmentation has been introduced to time series forecasting as an external
memory mechanism. ReTime~\cite{retime} shows that retrieving relevant reference time
series can reduce forecasting uncertainty, especially for long-horizon
prediction. RAFT~\cite{raft} further retrieves historical candidates with input patterns
similar to the query and uses their future trajectories as predictive
inductive bias. Beyond deterministic forecasting, RATD~\cite{ratd} uses retrieved
references to guide the denoising process in diffusion-based forecasting.
TS-RAG~\cite{tsrag} extends the same principle to time series foundation models by
retrieving semantically relevant segments from a knowledge base and fusing
them with model representations for zero-shot forecasting. These studies
suggest that retrieval can provide external historical patterns that are
difficult to fully capture through parametric training alone.

\begin{figure*}[t]
    \centering
    \includegraphics[width=0.97\textwidth]{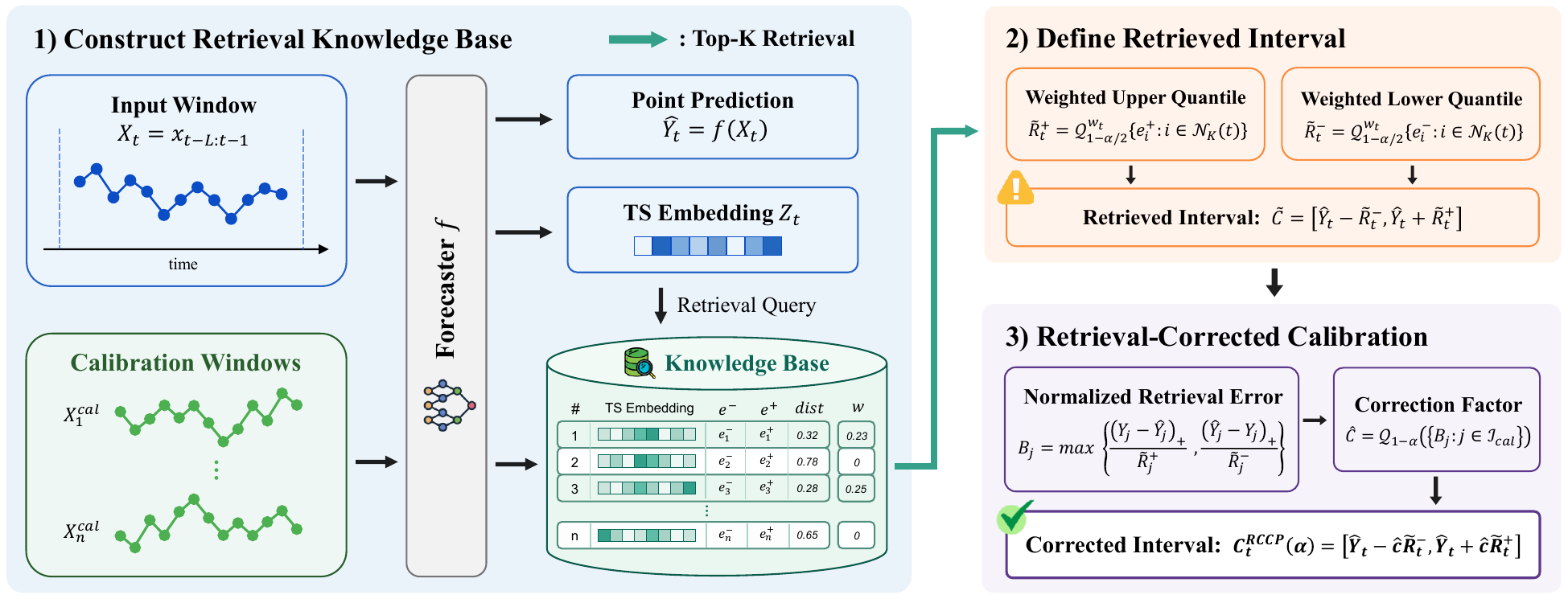}
    \caption{
    Overview of Retrieval--Corrected Conformal Prediction. RCCP builds a retrieval knowledge base from evaluated prediction contexts, forms an asymmetric retrieved interval from one-sided residuals of similar contexts, and calibrates the interval with a scalar correction estimated from normalized retrieval errors.
    }
    \label{fig:rccp_overview}
\end{figure*}

\section{Retrieval--Corrected Conformal Prediction}

In this section, we present \emph{Retrieval-Corrected Conformal Prediction} (RCCP). RCCP separates local residual evidence from the conformal correction needed for coverage. We first introduce the problem setting and then formalize the retrieval and correction steps.

\subsection{Problem Definition}

We consider a time series regression setting with observations $\{(X_t,Y_t)\}_{t=1}^T$ indexed by time. At prediction time $t$, the input $X_t\in\mathcal X$ contains information available before the response $Y_t\in\mathbb R$ is revealed, such as a lookback window, known exogenous variables, calendar features, or model states computable at prediction time. A base forecaster $f:\mathcal X\to\mathbb R$ is trained on a proper training period and then fixed during calibration and testing, producing the point prediction $\widehat Y_t=f(X_t)$. For a miscoverage level $\alpha\in(0,1)$, the goal is to construct a prediction interval $C_t^\alpha(X_t)$ satisfying the marginal coverage criterion
\begin{equation}
    \mathbb P\left(Y_t\in C_t^\alpha(X_t)\right)\ge 1-\alpha.
    \label{eq:coverage_objective}
\end{equation}

After the forecaster is fixed, calibration uses only held-out residuals. We assess validity by coverage and efficiency by interval length.

\paragraph{Split conformal prediction.}
The observations are divided in time order into a proper training period $\mathcal D_{\mathrm{train}}$, a calibration period $\mathcal D_{\mathrm{cal}}$, and a test period $\mathcal D_{\mathrm{test}}$, with calibration and test indices $\mathcal I_{\mathrm{cal}}$ and $\mathcal I_{\mathrm{test}}$. For regression, the standard split conformal score is the absolute residual $S_t=|Y_t-\widehat Y_t|$ for $t\in\mathcal I_{\mathrm{cal}}$. Split conformal prediction estimates the forecast-error scale by the empirical quantile $\widehat q_{\alpha}=Q_{1-\alpha}(\{S_t:t\in\mathcal I_{\mathrm{cal}}\})$, where $Q_{1-\alpha}$ denotes the conformal empirical quantile under the standard finite-sample convention. The corresponding interval is
\begin{equation}
    C_t^{\mathrm{split}}(\alpha)
    =[\widehat Y_t-\widehat q_{\alpha},\widehat Y_t+\widehat q_{\alpha}].
    \label{eq:split_interval}
\end{equation}

Under exchangeability of the calibration and test scores, Eq.~\eqref{eq:split_interval} satisfies the coverage objective in Eq.~\eqref{eq:coverage_objective}.

For time series forecasting, the residual sequence is temporally dependent and the error distribution can change over time. Consequently, a single global residual quantile may be poorly matched to the uncertainty at a particular test time. Weighted and localized conformal methods address this by assigning nonuniform influence to calibration residuals or by conditioning residual quantiles on recent observations, temporal states, or learned features~\cite{nexcp,kowcpi,rescp,ctssf}. RCCP follows the same motivation for local adaptivity, but separates the construction of local residual evidence from the final coverage calibration. It first constructs a local interval from retrieved residuals and then calibrates the remaining error of that interval.

\subsection{Retrieval-Augmented Local Interval}
\label{sec:interval_retreival}
The retrieval stage provides the local residual evidence used by RCCP. The current prediction context is encoded as a retrieval key, matched against a time-ordered knowledge base of evaluated residuals, and summarized through asymmetric residual quantiles. The goal of this stage is local adaptivity, not coverage certification.

\paragraph{Retrieval knowledge base construction.}
To construct the retrieval knowledge base, RCCP encodes each prediction context into a retrieval key $z_t$ before $Y_t$ is observed. The key can be derived from an input-window embedding, a forecaster representation, point prediction features, exogenous variables, or their concatenation. We write $z_t=\psi_f(X_t,\widehat Y_t)$, where $\psi_f$ may use any predictable forecaster features. Once the response at time $i$ is observed, RCCP appends the key together with the realized one-sided residuals, $e_i^+=(Y_i-\widehat Y_i)_+$ and $e_i^-=(\widehat Y_i-Y_i)_+$. The two residuals are stored separately to allow different upper and lower tail behavior. Before time $t$, the knowledge base is $\mathcal K_{t-1}=\{(z_i,e_i^+,e_i^-):i<t\}$. This ordering ensures that the interval at time $t$ uses only past residuals.

\paragraph{Residual retrieval.}
RCCP forms a local residual set by comparing the query $z_t$ with the keys stored in $\mathcal K_{t-1}$. Let $d(z_t,z_i)$ denote a retrieval distance between the current context and a past context, where smaller values indicate closer contexts. In our experiments, we instantiate $d$ as Euclidean distance. We denote the retrieval operation by $\mathcal N_K(t)=\operatorname{KNN}_K(z_t,\mathcal K_{t-1},d)$, which returns the $K$ past indices with the smallest distances to $z_t$. This step localizes calibration to residuals associated with similar past prediction contexts. State-based time series CP uses an analogous principle by reweighting residuals according to proximity in a learned or constructed state space~\cite{rescp}. In RCCP, this neighborhood determines only the local interval, and the coverage correction is computed separately.

The retrieved residuals may be treated uniformly or assigned distance-based weights. In the weighted case,
\begin{equation}
    w_i(t)=
    \frac{\exp(-d(z_t,z_i)/\tau)}
    {\sum_{j\in\mathcal N_K(t)}\exp(-d(z_t,z_j)/\tau)},
    \qquad i\in\mathcal N_K(t),
    \label{eq:weights}
\end{equation}

where $\tau>0$ controls the concentration of the local empirical distribution. Uniform weights recover the hard top-$K$ case. In both cases, $w_t$ specifies the empirical distribution used to compute local residual quantiles.

\paragraph{Asymmetric retrieved interval.}
RCCP summarizes the retrieved residuals separately on the upper and lower sides. This avoids imposing symmetry on the forecast error and allows the interval to adapt differently to positive and negative deviations. The retrieved quantiles are
\begin{equation}
\begin{aligned}
    \widetilde R_t^+
    &=Q_{1-\alpha/2}^{w_t}\{e_i^+:i\in\mathcal N_K(t)\},\\
    \widetilde R_t^-
    &=Q_{1-\alpha/2}^{w_t}\{e_i^-:i\in\mathcal N_K(t)\},
\end{aligned}
\label{eq:retrieved_quantiles}
\end{equation}
and they define the retrieved interval
\begin{equation}
    \widetilde C_t
    =[\widehat Y_t-\widetilde R_t^-,\widehat Y_t+\widetilde R_t^+].
    \label{eq:retrieved_interval}
\end{equation}

This construction is related to adaptive lower and upper interval estimates in conformalized quantile regression, but the interval here is obtained from retrieved residuals rather than learned conditional quantile functions~\cite{cqr}.

\paragraph{Need for conformal correction.}
The retrieved interval can be locally adaptive because it uses residuals from contexts close to the current prediction. In conformal prediction, however, locality alone does not imply calibration. The retrieved set is finite and data-dependent, and its quantiles can be unstable when the effective support is small or biased when proximity in the key space does not match future error behavior. RCCP therefore evaluates Eq.~\eqref{eq:retrieved_interval} on calibration data and corrects its remaining error, rather than using it as the final conformal interval.

\subsection{Retrieval-Corrected Calibration}
\label{sec:retrieval_correction}

After retrieval has produced a local interval, the correction step calibrates its remaining error. Rather than calibrating raw residual magnitudes, RCCP calibrates residuals after normalization by the interval produced by retrieval. This makes the correction depend on how well retrieval estimated the local scale, instead of discarding the local information and returning to a global residual threshold.

\paragraph{Normalized retrieval error and correction.}
For each calibration index $j\in\mathcal I_{\mathrm{cal}}$, the retrieved interval $\widetilde C_j$ is computed before observing $Y_j$. Once $Y_j$ is revealed, RCCP computes
\begin{equation}
    B_j=
    \max\left\{
    \frac{(Y_j-\widehat Y_j)_+}{\widetilde R_j^+},
    \frac{(\widehat Y_j-Y_j)_+}{\widetilde R_j^-}
    \right\}.
    \label{eq:rccp_score}
\end{equation}
The first ratio measures the positive residual relative to the retrieved upper scale, while the second ratio measures the negative residual relative to the retrieved lower scale. Thus, $B_j$ is a scalar nonconformity score for the asymmetric interval. RCCP then estimates a scalar correction from the empirical distribution of these scores. Let $q=1-\alpha$, and let $q_n$ denote the finite-sample conformal quantile level. The correction is $\widehat c=\widehat F_{\mathrm{cal}}^{-1}(q_n)=Q_{1-\alpha}(\{B_j:j\in\mathcal I_{\mathrm{cal}}\})$, and the final interval is
\begin{equation}
    C_t^{\mathrm{RCCP}}(\widehat c)
    =[\widehat Y_t-\widehat c\,\widetilde R_t^-,
      \widehat Y_t+\widehat c\,\widetilde R_t^+].
    \label{eq:rccp_interval}
\end{equation}
Values of $\widehat c$ above one indicate that retrieved intervals tend to be too narrow on calibration data, while values below one indicate conservative retrieved intervals. Direct retrieval corresponds to fixing this multiplier at one, whereas RCCP estimates it from Eq.~\eqref{eq:rccp_score}.

\paragraph{Implementation.}
Algorithm~\ref{alg:rccp} summarizes the procedure. During calibration, RCCP constructs retrieved intervals, records normalized retrieval errors, and estimates $\widehat c$. During testing, the same retrieval rule constructs a retrieved interval for each new point and scales it by $\widehat c$. The knowledge base is updated only after the target is observed. Retrieval-based scores are computed only after the knowledge base contains at least $K$ evaluated residuals.

\begin{algorithm}[t]
\caption{Retrieval-Corrected Conformal Prediction}
\label{alg:rccp}
\begin{algorithmic}[1]
\STATE \textbf{Input:} forecaster $f$, key map $\psi_f$, distance $d$, knowledge base $\mathcal K$, indices $\mathcal I_{\mathrm{cal}},\mathcal I_{\mathrm{test}}$, level $\alpha$, neighbors $K$
\STATE \textbf{Output:} prediction intervals $C_t^{\mathrm{RCCP}}(\alpha)$
\STATE $\mathcal B \gets \emptyset$
\FOR{each $j \in \mathcal I_{\mathrm{cal}}$ in increasing order}
    \STATE $\widehat Y_j \gets f(X_j)$ and compute $z_j=\psi_f(X_j,\widehat Y_j)$
    \STATE $\mathcal N_K(j)\gets\operatorname{KNN}_K(z_j,\mathcal K,d)$
    \STATE compute $\widetilde R_j^+$ and $\widetilde R_j^-$ by Eq.~\eqref{eq:retrieved_quantiles}
    \STATE observe $Y_j$ and compute $e_j^+$, $e_j^-$, and $B_j$ by Eq.~\eqref{eq:rccp_score}
    \STATE $\mathcal B \gets \mathcal B\cup\{B_j\}$
    \STATE $\mathcal K \gets \mathcal K\cup\{(z_j,e_j^+,e_j^-)\}$
\ENDFOR
\STATE $\widehat c \gets Q_{1-\alpha}(\mathcal B)$
\FOR{each $t \in \mathcal I_{\mathrm{test}}$ in increasing order}
    \STATE $\widehat Y_t \gets f(X_t)$ and compute $z_t=\psi_f(X_t,\widehat Y_t)$
    \STATE $\mathcal N_K(t)\gets\operatorname{KNN}_K(z_t,\mathcal K,d)$
    \STATE compute $\widetilde R_t^+$ and $\widetilde R_t^-$ by Eq.~\eqref{eq:retrieved_quantiles}
    \STATE output $C_t^{\mathrm{RCCP}}(\alpha)$ by Eq.~\eqref{eq:rccp_interval}
    \STATE observe $Y_t$ and compute $e_t^+$ and $e_t^-$
    \STATE $\mathcal K \gets \mathcal K\cup\{(z_t,e_t^+,e_t^-)\}$
\ENDFOR
\end{algorithmic}
\end{algorithm}

\subsection{Theoretical Analysis}


We analyze the coverage behavior of RCCP without relying on exchangeability between calibration and test scores, which is unrealistic in time series settings~\cite{nexcp}. 
Instead, we establish a coverage-gap bound under stability conditions on the normalized retrieval error distribution, adapting standard non-exchangeable conformal prediction assumptions to our retrieval-normalized score setting~\cite{oliveira2024split,enbpi,sesia2020comparison}.
Theorem~\ref{thm:coverage_gap} decomposes the coverage error into calibration-test score mismatch and empirical correction error. Corollary~\ref{cor:asymptotic_coverage} shows that RCCP approaches the target coverage $1-\alpha$ when the normalized retrieval error is stable and the empirical correction is consistent.

\paragraph{Notation.}
Let $F_{\rm cal}(c)=\mathbb P_{\rm cal}(B\le c)$ and $F_{\rm test}(c)=\mathbb P_{\rm test}(B\le c)$ denote the calibration and test CDFs of the normalized retrieval error. Set $q=1-\alpha$, and let $c^*$ be the target calibration multiplier satisfying $F_{\rm cal}(c^*)=q$. We define the empirical calibration error and the finite-sample quantile-level error as
\begin{equation}
    e_n=\sup_c|\widehat F_{\rm cal}(c)-F_{\rm cal}(c)|,
    \qquad
    r_n=|q_n-q|,
    \label{eq:emp_errors}
\end{equation}
where $q_n$ is the finite-sample quantile level used to compute the empirical correction.

\begin{assumption}[Calibration-test score stability]
\label{asm:stability}
There exists $\rho_n\ge 0$ such that
\begin{equation*}
    |F_{\rm test}(c^*)-F_{\rm cal}(c^*)|\le\rho_n.
\end{equation*}
\end{assumption}

This assumption imposes calibration-test stability at the target multiplier
$c^*$ for the normalized retrieval error distribution. The term $\rho_n$
captures the residual mismatch remaining after retrieval-based normalization.

\begin{assumption}[Local identifiability of the calibration score quantile]
\label{asm:identifiability}
There exist constants $m>0$ and $\delta_{\rm cal}>0$ such that $F_{\rm cal}(c^*)=q$, and for every $u\in[0,\delta_{\rm cal}]$,
\begin{equation*}
\begin{aligned}
    F_{\rm cal}(c^*+u) &\ge q+mu, \\
    F_{\rm cal}(c^*-u) &\le q-mu.
\end{aligned}
\end{equation*}
\end{assumption}

This assumption keeps $\widehat c$ close to $c^*$, with $m$ controlling the local
separation of the calibration CDF near the target quantile.

\begin{assumption}[Local Lipschitzness of the test score CDF]
\label{asm:lipschitz}
There exist constants $L<\infty$ and $\delta_{\rm test}>0$ such that, for every $c$ satisfying $|c-c^*|\le\delta_{\rm test}$,
\begin{equation*}
    |F_{\rm test}(c)-F_{\rm test}(c^*)|\le L|c-c^*|.
\end{equation*}
\end{assumption}

Local smoothness of the test score CDF converts multiplier error near
$c^*$ into a test coverage-gap bound. The constant $L$ controls the local
sensitivity of test coverage to multiplier perturbations.

\begin{assumption}[Empirical calibration accuracy]
\label{asm:emp_acc}
The empirical calibration error and the finite-sample quantile-level error satisfy
\begin{equation*}
    e_n+r_n\le\frac{m\delta_0}{2}.
\end{equation*}
\end{assumption}

Here, $\delta_0=\min\{\delta_{cal}, \delta_{test}\}$ is the local radius around $c^*$ over which the regularity
assumptions hold. The condition ensures that calibration error is small enough
to keep $\widehat c$ within this neighborhood, with $e_n$ and $r_n$ capturing
empirical CDF error and finite-sample quantile adjustment, respectively.

\begin{lemma}[Score-event equivalence]
\label{lem:event_equiv}
For any $c\ge 0$,
\begin{equation*}
    Y_t\in C_t(c)\quad\Longleftrightarrow\quad B_t\le c.
\end{equation*}
\end{lemma}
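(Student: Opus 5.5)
The plan is to unpack both sides into statements about the signed residual $Y_t-\widehat Y_t$ and compare them directly. I will assume $\widetilde R_t^+>0$ and $\widetilde R_t^->0$, so that $B_t$ in Eq.~\eqref{eq:rccp_score} is well defined. If a retrieved scale can vanish, I would adopt the conventions $0/0=0$ and $a/0=+\infty$ for $a>0$, and check that each step still goes through. Write $\varepsilon_t=Y_t-\widehat Y_t$, so that $(Y_t-\widehat Y_t)_+=\varepsilon_t^+$ and $(\widehat Y_t-Y_t)_+=\varepsilon_t^-$. By Eq.~\eqref{eq:rccp_interval}, $Y_t\in C_t(c)$ holds exactly when
\[
-c\,\widetilde R_t^-\le \varepsilon_t\le c\,\widetilde R_t^+ .
\]

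The first step uses the fact that a maximum is at most $c$ iff each term is. Hence $B_t\le c$ is equivalent to the conjunction of $\varepsilon_t^+\le c\,\widetilde R_t^+$ and $\varepsilon_t^-\le c\,\widetilde R_t^-$, after multiplying through by the positive scales.

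The second step shows that $\varepsilon_t^+\le c\,\widetilde R_t^+$ is equivalent to $\varepsilon_t\le c\,\widetilde R_t^+$, splitting on the sign of $\varepsilon_t$.
\begin{itemize}
\item If $\varepsilon_t\ge 0$, then $\varepsilon_t^+=\varepsilon_t$ and the two inequalities coincide.
\item If $\varepsilon_t<0$, then $\varepsilon_t^+=0$. Since $c\ge 0$ and $\widetilde R_t^+\ge 0$, both inequalities hold trivially.
\end{itemize}
The same sign split shows that $\varepsilon_t^-\le c\,\widetilde R_t^-$ is equivalent to $-\varepsilon_t\le c\,\widetilde R_t^-$, i.e.\ $\varepsilon_t\ge -c\,\widetilde R_t^-$. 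Combining the two sides gives exactly the interval condition displayed above, which proves the equivalence.

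The only delicate point is the degenerate case where a retrieved quantile is zero. Here the hypothesis $c\ge 0$ and the convention for the ratio matter. With $0/0=0$ and $a/0=+\infty$ for $a>0$, the degenerate side is covered only when the corresponding one-sided residual is exactly zero. This matches membership in an interval whose endpoint on that side collapses to $\widehat Y_t$, so the equivalence survives. Everything else is elementary, so no step should pose a real obstacle.
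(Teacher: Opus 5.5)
Your proof is correct and follows the same route as the paper's. Both rewrite interval membership as the two one-sided inequalities $e_t^+\le c\,\widetilde R_t^+$ and $e_t^-\le c\,\widetilde R_t^-$, divide by the retrieved scales, and identify the resulting maximum as $B_t$. Your sign split (using $c\ge 0$) makes explicit a step the paper states without justification, and your handling of zero retrieved scales goes slightly beyond the paper, which simply assumes $\widetilde R_t^+,\widetilde R_t^->0$.
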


This equivalence reduces coverage analysis to the scalar score distribution $F_{\rm test}$, and the full proof is provided in Appendix~\ref{app:theory}.

\begin{theorem}[Coverage gap bound of RCCP]
\label{thm:coverage_gap}
Suppose Assumptions~\ref{asm:stability}--\ref{asm:emp_acc} hold, and let
\begin{equation*}
    u_n:=\frac{2(e_n+r_n)}{m}.
\end{equation*}
If $u_n<\delta_0$, then
\begin{equation*}
    \bigl|
        \mathbb P_{\rm test}\{Y_t\in C_t(\widehat c)\}-q
    \bigr|
    \le
    \rho_n+\frac{2L}{m}(e_n+r_n).
\end{equation*}
\end{theorem}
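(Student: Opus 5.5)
The plan is to reduce coverage to the test score CDF evaluated at $\widehat c$, and then control the distance $|\widehat c-c^*|$ through the calibration CDF. By Lemma~\ref{lem:event_equiv}, $\mathbb P_{\rm test}\{Y_t\in C_t(\widehat c)\}=F_{\rm test}(\widehat c)$. We treat $\widehat c$ as fixed given calibration data, so the statement is conditional on the calibration sample; the test point is evaluated after $\widehat c$ is frozen. The triangle inequality then gives
\begin{equation*}
|F_{\rm test}(\widehat c)-q|\le |F_{\rm test}(\widehat c)-F_{\rm test}(c^*)|+|F_{\rm test}(c^*)-F_{\rm cal}(c^*)|,
\end{equation*}
using $F_{\rm cal}(c^*)=q$. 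The second term is at most $\rho_n$ by Assumption~\ref{asm:stability}. The first term is at most $L|\widehat c-c^*|$ by Assumption~\ref{asm:lipschitz}, provided $|\widehat c-c^*|\le\delta_{\rm test}$. It therefore suffices to show $|\widehat c-c^*|\le u_n$ with $u_n<\delta_0$. This gives the bound $Lu_n=\frac{2L}{m}(e_n+r_n)$.

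The main step is the quantile localization $|\widehat c-c^*|\le u_n$, which I would prove by contradiction on each side. Recall $\widehat c=\widehat F_{\rm cal}^{-1}(q_n)=\inf\{c:\widehat F_{\rm cal}(c)\ge q_n\}$.

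\textbf{Upper side.} Take $u=u_n\le\delta_{\rm cal}$. Assumption~\ref{asm:identifiability} gives $F_{\rm cal}(c^*+u)\ge q+mu=q+2(e_n+r_n)$. Hence
\begin{equation*}
\widehat F_{\rm cal}(c^*+u)\ge q+2(e_n+r_n)-e_n\ge q_n+e_n+r_n-r_n\ge q_n,
\end{equation*}
since $q_n\le q+r_n$. By the definition of the generalized inverse, $\widehat c\le c^*+u_n$.

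\textbf{Lower side.} For any $c<c^*-u_n$, monotonicity and Assumption~\ref{asm:identifiability} give $F_{\rm cal}(c)\le F_{\rm cal}(c^*-u_n)\le q-2(e_n+r_n)$. Then
\begin{equation*}
\widehat F_{\rm cal}(c)\le q-2(e_n+r_n)+e_n<q-r_n\le q_n.
\end{equation*}
This inequality is strict when $e_n+r_n>0$. The degenerate case $e_n=r_n=0$ needs a small separate check: there the strict inequality $F_{\rm cal}(c)\le q-m(c^*-c)<q$ handles it directly, after applying the identifiability bound at $u=c^*-c$ capped at $\delta_{\rm cal}$. So no $c<c^*-u_n$ attains level $q_n$, and $\widehat c\ge c^*-u_n$.

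Combining the two sides, $|\widehat c-c^*|\le u_n<\delta_0\le\delta_{\rm test}$, so the Lipschitz step applies. Assumption~\ref{asm:emp_acc} is exactly what guarantees $u_n\le\delta_0$, and the hypothesis $u_n<\delta_0$ is used as stated.

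The main obstacle I expect is the careful handling of the generalized inverse at the boundary. This means the strict versus non-strict inequalities on the lower side, plus the possibility that $\widehat F_{\rm cal}$ jumps. I would resolve it by working only with the infimum definition and the monotonicity of $\widehat F_{\rm cal}$, as above. A secondary subtlety is the conditioning: the argument treats $\widehat c$ as fixed given calibration data, so the test CDF is evaluated at a deterministic point.
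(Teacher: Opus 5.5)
Your proposal is correct and follows essentially the same route as the paper: reduce coverage to $F_{\rm test}(\widehat c)$ via the score-event lemma, bracket the generalized inverse to get $|\widehat c-c^*|\le u_n$, and finish with the Lipschitz and stability assumptions through the triangle inequality. The only difference is how you get the strict inequality on the lower side. The paper obtains it uniformly by choosing $u\in(u_n,\min\{c^*-c,\delta_0\}]$ so that $-mu<-mu_n$, whereas you take $u=u_n$, use $e_n+r_n>0$, and treat $e_n=r_n=0$ separately. Both are valid, and yours incidentally needs only $u_n\le\delta_0$, which Assumption~\ref{asm:emp_acc} already supplies.
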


The bound separates calibration-test score mismatch from empirical correction error. Retrieval affects the interval through the sharpness and stability of $\widetilde C_t$, while coverage is controlled through the normalized retrieval error $B_t$. The full proof is provided in Appendix~\ref{app:theory}.

\begin{corollary}[Asymptotic Coverage of RCCP]
\label{cor:asymptotic_coverage}
Suppose Assumptions~\ref{asm:stability}--\ref{asm:emp_acc} hold with fixed constants $m>0$, $L<\infty$, and $\delta_0>0$. If $\rho_n\to0$, $e_n\to0$, and $r_n\to0$, then
\begin{equation*}
    \left|
        \mathbb P_{\rm test}\{Y_t\in C_t(\widehat c)\}-q
    \right|\to 0.
\end{equation*}
\end{corollary}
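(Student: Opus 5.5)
The corollary follows from Theorem~\ref{thm:coverage_gap} by a limiting argument. Since $m$, $L$ and $\delta_0$ are fixed, the bound
\begin{equation*}
    \bigl|\mathbb P_{\rm test}\{Y_t\in C_t(\widehat c)\}-q\bigr|
    \le \rho_n+\frac{2L}{m}(e_n+r_n)
\end{equation*}
applies whenever its hypothesis $u_n<\delta_0$ holds. The right-hand side tends to zero because $\rho_n\to0$ and $e_n+r_n\to0$. So the only work is to check that the hypothesis of the theorem holds for all sufficiently large $n$.

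\textbf{Step 1: the theorem applies eventually.} We have $u_n=2(e_n+r_n)/m$. Since $m>0$ is fixed and $e_n+r_n\to0$, it follows that $u_n\to0$. Because $\delta_0>0$ is fixed, there is some $N$ such that $u_n<\delta_0$ for all $n\ge N$. For these $n$, Assumptions~\ref{asm:stability}--\ref{asm:emp_acc} are in force by hypothesis.

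Assumption~\ref{asm:emp_acc} only requires $e_n+r_n\le m\delta_0/2$, which is the same as $u_n\le\delta_0$. We need the strict version $u_n<\delta_0$, but it holds eventually anyway, so the small gap between the two conditions is absorbed by passing to large $n$. Theorem~\ref{thm:coverage_gap} therefore yields the displayed bound for every $n\ge N$.

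\textbf{Step 2: pass to the limit.} The right-hand side is a finite sum of null sequences multiplied by the fixed constants $1$ and $2L/m$, so it tends to $0$. The left-hand side is nonnegative. By squeezing, the coverage gap tends to $0$. Finitely many initial indices do not affect the limit.

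\textbf{Main obstacle.} There is essentially no analytic difficulty here. The only delicate point is interpretive: $e_n$, and possibly $\rho_n$, may be random quantities that depend on the calibration sample, whereas Theorem~\ref{thm:coverage_gap} is a deterministic implication.

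If the convergence is meant deterministically, as written, the argument above is complete. If $e_n\to0$ is meant only in probability (for instance, through a Glivenko--Cantelli or mixing-type DKW bound), I would argue on the event $\{e_n+r_n\le\varepsilon\}$, whose probability tends to one, and conclude that the gap tends to zero in probability.

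I would state the corollary in the deterministic sense used by the theorem and remark that the probabilistic version follows in the same way.
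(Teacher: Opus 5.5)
Your proposal is correct and follows essentially the same argument as the paper: use $e_n+r_n\to0$ to get $u_n<\delta_0$ for all sufficiently large $n$, invoke Theorem~\ref{thm:coverage_gap}, and squeeze the nonnegative coverage gap by $\rho_n+\frac{2L}{m}(e_n+r_n)\to0$. Your extra remarks go slightly beyond what the paper states but do not change the argument. One concerns the strict versus non-strict form of the condition in Assumption~\ref{asm:emp_acc}; the other concerns an in-probability reading of $e_n\to0$.
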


The corollary follows directly from the coverage-gap bound: as the calibration-test score mismatch and empirical correction error vanish, $C_t(\widehat c)$ attains the target coverage asymptotically. The full proof is provided in Appendix~\ref{app:theory}.

\section{Experiments}

We evaluate RCCP on time-series conformal prediction benchmarks against representative split, adaptive, and localized conformal baselines. Section~\ref{sec:exp_setup} describes the experimental setup. Section~\ref{sec:main_results} reports the main benchmark results and further analyses, and Section~\ref{sec:ablations} presents ablation studies.

\subsection{Experimental Setup}
\label{sec:exp_setup}

\paragraph{Datasets.}
Our evaluation covers four benchmark datasets common in time series forecasting and conformal prediction studies, summarized in Table~\ref{tab:dataset_statistics}. For multi-location datasets, each location is treated as a separate series.
\vspace{4pt}

\begin{table}[!htbp]
\centering
\caption{Dataset statistics after preprocessing. Length denotes time points per series.}
\label{tab:dataset_statistics}
\resizebox{0.8\columnwidth}{!}{%
\begin{tabular}{llccc}
\toprule
Dataset & Frequency & Target & Length & \# Loc. \\
\midrule
Air~\cite{zhang2017cautionary} & Hourly & PM$_{10}$ & 35,064 & 12 \\
Solar~\cite{sengupta2018national} & Hourly & DHI & 26,304 & 50 \\
Electricity~\cite{harries1999splice} & 30-min & Transfer & 3,444 & 1 \\
Wind~\cite{enbpi} & Hourly & MWH & 13,870 & 1 \\
\bottomrule
\end{tabular}%
}
\end{table}

\definecolor{stdgray}{gray}{0.50}
\definecolor{bestbg}{gray}{0.90}    

\newcommand{\valfont}{\small}
\newcommand{\headfont}{\normalsize}
\newcolumntype{M}{>{\valfont}c}
\newcolumntype{N}{>{\valfont}c}

\newcommand{\ms}[2]{$#1$\,{\scriptsize\textcolor{stdgray}{$\pm\,#2$}}}
\newcommand{\gms}[2]{\textcolor{stdgray}{$#1$\,{\scriptsize$\pm\,#2$}}}
\newcommand{\bms}[2]{\cellcolor{bestbg}$\mathbf{#1}$\,{\scriptsize\textcolor{stdgray}{$\pm\,#2$}}}
\newcommand{\ums}[2]{$\underline{#1}$\,{\scriptsize\textcolor{stdgray}{$\pm\,#2$}}}

\begin{table*}[t]
\centering
\caption{
Performance across datasets and backbone forecasters at $\alpha=0.1$.
Values are mean $\pm$ standard deviation over 5 seeds after hyperparameter selection.
For $\Delta_{\mathrm{Cov}}$, values closer to zero are better, and lower values are better for PI-Width, Winkler, and Calibration Time.
Calibration Time measures conformal calibration and test interval construction after forecasts are available.
Shaded Winkler cells mark the lowest value, underlined Winkler values mark the second-lowest value, and shaded $\Delta_{\mathrm{Cov}}$ cells indicate severe undercoverage cases with $\Delta_{\mathrm{Cov}}<-2\%$.
}
\label{tab:main_results}

\small
\setlength{\tabcolsep}{1.8pt}
\renewcommand{\arraystretch}{1.00}

\begin{tabular*}{0.95\textwidth}{@{\extracolsep{\fill}}cc M *{7}{N}@{}}
\toprule
{\headfont Dataset}
& {\headfont Backbone}
& {\headfont Metric}
& {\headfont SCP}
& {\headfont EnbPI}
& {\headfont SPCI}
& {\headfont NexCP}
& {\headfont HopCPT}
& {\headfont ResCP}
& {\headfont RCCP (Ours)} \\
\midrule

\multirow[c]{8}{*}{Air}
& \multirow[c]{4}{*}{LSTM} & $\Delta$Cov & \ms{0.16}{0.22} & \gms{-6.36}{0.11} & \ms{-0.83}{1.10} & \ms{-1.06}{0.07} & \ms{-0.19}{0.17} & \ms{-0.14}{0.25} & \ms{0.43}{0.16} \\
& & PI-Width & \ms{62.72}{0.58} & \ms{53.39}{0.44} & \ms{63.09}{2.94} & \ms{63.72}{0.31} & \ms{62.47}{0.79} & \ms{61.35}{0.73} & \ms{60.20}{0.57} \\
& & Winkler & \ms{112.55}{0.63} & \ms{110.67}{0.61} & \ms{109.45}{0.34} & \ms{103.91}{0.21} & \ms{111.66}{0.93} & \ums{101.57}{0.48} & \bms{92.44}{0.35} \\
& & Time & \ms{35.28}{0.26} & \ms{143.38}{0.80} & \ms{664.91}{2.89} & \ms{35.52}{0.45} & \ms{262.74}{0.54} & \ms{102.75}{6.07} & \ms{70.94}{0.55} \\
\cmidrule(lr){2-10}

& \multirow[c]{4}{*}{Transf} & $\Delta$Cov & \ms{0.37}{0.15} & \gms{-6.43}{0.16} & \ms{-0.51}{0.66} & \ms{-1.69}{0.08} & \ms{-0.46}{0.39} & \ms{-0.28}{0.11} & \ms{0.20}{0.09} \\
& & PI-Width & \ms{63.88}{0.90} & \ms{53.33}{1.08} & \ms{64.72}{1.48} & \ms{65.43}{1.02} & \ms{62.04}{1.15} & \ms{61.45}{1.45} & \ms{60.94}{1.22} \\
& & Winkler & \ms{116.96}{2.75} & \ms{111.67}{3.05} & \ms{109.64}{2.57} & \ms{102.71}{1.50} & \ms{115.88}{3.29} & \ums{102.21}{1.82} & \bms{92.51}{1.44} \\
& & Time & \ms{35.34}{1.55} & \ms{143.78}{1.23} & \ms{658.39}{7.32} & \ms{35.14}{1.22} & \ms{247.23}{1.89} & \ms{98.77}{0.90} & \ms{72.45}{0.50} \\

\midrule

\multirow[c]{8}{*}{Solar}
& \multirow[c]{4}{*}{LSTM} & $\Delta$Cov & \ms{1.50}{0.09} & \ms{-1.84}{0.06} & \ms{0.75}{0.76} & \ms{-0.79}{0.08} & \ms{1.26}{2.20} & \ms{1.14}{0.14} & \ms{0.42}{0.24} \\
& & PI-Width & \ms{35.24}{1.36} & \ms{22.43}{1.04} & \ms{27.42}{3.07} & \ms{26.18}{0.90} & \ms{29.37}{8.77} & \ms{20.10}{1.28} & \ms{18.31}{0.38} \\
& & Winkler & \ms{62.87}{2.09} & \ms{55.08}{1.95} & \ms{54.86}{1.71} & \ms{51.19}{1.60} & \ms{39.02}{7.87} & \ums{38.90}{1.43} & \bms{26.08}{0.67} \\
& & Time & \ms{90.14}{1.11} & \ms{443.45}{5.51} & \ms{1964.26}{19.29} & \ms{96.13}{0.42} & \ms{862.02}{111.56} & \ms{323.09}{1.10} & \ms{167.27}{0.90} \\
\cmidrule(lr){2-10}

& \multirow[c]{4}{*}{Transf} & $\Delta$Cov & \ms{1.14}{0.06} & \ms{-1.61}{0.16} & \ms{1.48}{0.30} & \ms{-1.02}{0.31} & \ms{0.39}{0.81} & \ms{1.94}{0.48} & \ms{1.77}{0.52} \\
& & PI-Width & \ms{31.39}{2.30} & \ms{23.96}{1.80} & \ms{24.19}{1.88} & \ms{28.87}{1.62} & \ms{28.37}{4.54} & \ms{20.69}{0.78} & \ms{21.02}{1.05} \\
& & Winkler & \ms{58.85}{1.94} & \ms{54.76}{1.11} & \ms{44.57}{3.89} & \ms{52.26}{1.53} & \ms{43.27}{8.91} & \ums{38.41}{0.43} & \bms{29.29}{1.70} \\
& & Time & \ms{89.40}{2.39} & \ms{433.94}{3.78} & \ms{1925.61}{25.64} & \ms{94.98}{1.89} & \ms{764.09}{58.11} & \ms{317.41}{10.46} & \ms{186.28}{1.12} \\

\midrule

\multirow[c]{8}{*}{Wind}
& \multirow[c]{4}{*}{LSTM} & $\Delta$Cov & \ms{0.23}{0.15} & \gms{-2.31}{0.10} & \ms{-1.95}{2.81} & \ms{-0.06}{0.05} & \ms{2.25}{2.04} & \ms{1.56}{0.15} & \ms{0.62}{0.49} \\
& & PI-Width & \ms{63.42}{0.66} & \ms{59.54}{0.36} & \ms{58.34}{8.97} & \ms{62.60}{0.49} & \ms{73.20}{10.36} & \ms{63.36}{0.55} & \ms{58.75}{1.11} \\
& & Winkler & \ms{96.29}{0.09} & \ms{97.92}{0.33} & \ms{99.55}{1.73} & \ms{94.50}{0.25} & \ms{99.06}{3.03} & \ums{89.75}{0.47} & \bms{83.79}{0.43} \\
& & Time & \ms{0.72}{0.02} & \ms{4.49}{0.07} & \ms{19.32}{0.34} & \ms{0.86}{0.02} & \ms{8.89}{0.20} & \ms{4.05}{0.01} & \ms{1.50}{0.01} \\
\cmidrule(lr){2-10}

& \multirow[c]{4}{*}{Transf} & $\Delta$Cov & \ms{-0.50}{0.81} & \gms{-2.06}{0.14} & \ms{-0.40}{1.84} & \ms{0.37}{0.18} & \ms{-0.64}{3.77} & \ms{1.49}{0.42} & \ms{1.52}{0.15} \\
& & PI-Width & \ms{65.28}{1.49} & \ms{60.86}{1.62} & \ms{64.50}{6.10} & \ms{67.65}{2.65} & \ms{66.32}{6.82} & \ms{64.83}{1.81} & \ms{63.52}{1.41} \\
& & Winkler & \ms{101.35}{2.17} & \ms{100.25}{1.59} & \ms{100.79}{1.87} & \ms{100.68}{1.60} & \ms{99.09}{3.22} & \ums{93.41}{2.33} & \bms{87.78}{1.02} \\
& & Time & \ms{0.71}{0.01} & \ms{4.50}{0.13} & \ms{19.03}{0.37} & \ms{0.84}{0.03} & \ms{8.62}{0.53} & \ms{3.87}{0.08} & \ms{1.52}{0.01} \\

\midrule

\multirow[c]{8}{*}{Electricity}
& \multirow[c]{4}{*}{LSTM} & $\Delta$Cov & \ms{5.18}{1.29} & \gms{-3.12}{0.19} & \ms{4.13}{0.94} & \ms{1.09}{0.91} & \ms{4.28}{1.52} & \ms{1.23}{0.19} & \ms{0.54}{0.72} \\
& & PI-Width & \ms{0.21}{0.02} & \ms{0.15}{0.01} & \ms{0.19}{0.02} & \ms{0.17}{0.01} & \ms{0.20}{0.02} & \ms{0.16}{0.01} & \ms{0.16}{0.01} \\
& & Winkler & \ms{0.25}{0.02} & \ms{0.24}{0.02} & \ms{0.24}{0.02} & \ums{0.23}{0.02} & \ms{0.25}{0.02} & \ums{0.23}{0.01} & \bms{0.22}{0.01} \\
& & Time & \ms{0.20}{0.03} & \ms{1.18}{0.04} & \ms{4.41}{0.03} & \ms{0.24}{0.00} & \ms{4.36}{0.65} & \ms{1.77}{0.20} & \ms{0.23}{0.00} \\
\cmidrule(lr){2-10}

& \multirow[c]{4}{*}{Transf} & $\Delta$Cov & \ms{5.33}{3.12} & \gms{-5.98}{0.65} & \ms{-0.11}{2.30} & \ms{2.61}{1.14} & \ms{4.75}{0.36} & \ms{1.96}{0.56} & \ms{1.99}{1.61} \\
& & PI-Width & \ms{0.30}{0.03} & \ms{0.16}{0.01} & \ms{0.19}{0.02} & \ms{0.25}{0.02} & \ms{0.28}{0.02} & \ms{0.20}{0.01} & \ms{0.20}{0.03} \\
& & Winkler & \ms{0.35}{0.02} & \ms{0.31}{0.03} & \ums{0.28}{0.01} & \ms{0.32}{0.04} & \ms{0.34}{0.03} & \ms{0.29}{0.02} & \bms{0.27}{0.02} \\
& & Time & \ms{0.18}{0.00} & \ms{1.11}{0.01} & \ms{4.32}{0.12} & \ms{0.23}{0.01} & \ms{3.77}{0.13} & \ms{1.87}{0.63} & \ms{0.27}{0.01} \\

\bottomrule
\end{tabular*}

\vspace{-1.5mm}
\end{table*}

\paragraph{Time series forecasters.}
We evaluate each conformal method with LSTM and Transformer backbones. For each dataset, observations are chronologically split into training, calibration, and test periods with proportions 40\%, 40\%, and 20\%, respectively. We train one global forecaster per dataset on the training period and keep it fixed during calibration and testing~\cite{hopcpt}. Calibration observations are used only for conformal calibration, and all conformal methods are evaluated on the same frozen forecasts. Unless otherwise stated, both backbones use hidden dimension 128 and batch size 256, and are trained for 150 epochs with Adam using learning rate $10^{-3}$.

\paragraph{Baselines.}
We compare RCCP with representative conformal baselines for time series. The baselines include split conformal prediction (SCP)~\cite{vovk2005algorithmic}, online or adaptive methods such as EnbPI~\cite{enbpi} and SPCI~\cite{spci}, and localized methods such as NexCP~\cite{nexcp}, HopCPT~\cite{hopcpt}, and ResCP~\cite{rescp}. All methods use the same temporal partitions, frozen point forecasts, and random seeds, with hyperparameters selected on the calibration split when needed.

\paragraph{Evaluation metrics.}
For each test point, let $[L_t,U_t]$ denote the reported prediction interval. We report empirical coverage, signed coverage gap, PI-Width, Winkler score, and Calibration Time. Empirical coverage measures validity,
\begin{equation}
\widehat{\mathrm{Cov}}
=
\frac{1}{|\mathcal I_{\mathrm{test}}|}
\sum_{t\in\mathcal I_{\mathrm{test}}}
\mathbf 1\{Y_t\in [L_t,U_t]\}.
\label{eq:metric_coverage}
\end{equation}
The signed coverage gap is reported in percentage points,
\begin{equation}
\Delta_{\mathrm{Cov}}
=
100\left(\widehat{\mathrm{Cov}}-(1-\alpha)\right),
\label{eq:metric_gap}
\end{equation}
so negative values indicate undercoverage. PI-Width measures average interval length,
\begin{equation}
\mathrm{PI\mbox{-}Width}
=
\frac{1}{|\mathcal I_{\mathrm{test}}|}
\sum_{t\in\mathcal I_{\mathrm{test}}}(U_t-L_t).
\label{eq:metric_width}
\end{equation}
To jointly assess interval length and miscoverage, we report the Winkler score,
\begin{equation}
W_t=(U_t-L_t)
+\frac{2}{\alpha}(L_t-Y_t)\mathbf 1\{Y_t<L_t\}
+\frac{2}{\alpha}(Y_t-U_t)\mathbf 1\{Y_t>U_t\},
\label{eq:metric_winkler_single}
\end{equation}
and average it over the test period. Calibration Time is the post-forecasting time needed to calibrate each conformal method and construct test intervals. For RCCP, it also includes serialization of retrieval keys for knowledge-base construction.

\begin{figure}[t]
    \centering

    \includegraphics[width=0.5\linewidth]{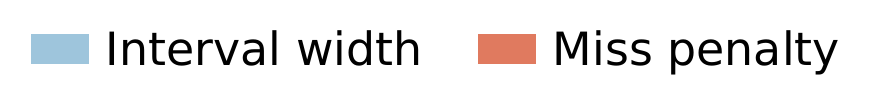}
    \vspace{-0.3em}

    \begin{minipage}[t]{0.485\linewidth}
        \centering
        \includegraphics[width=\linewidth]{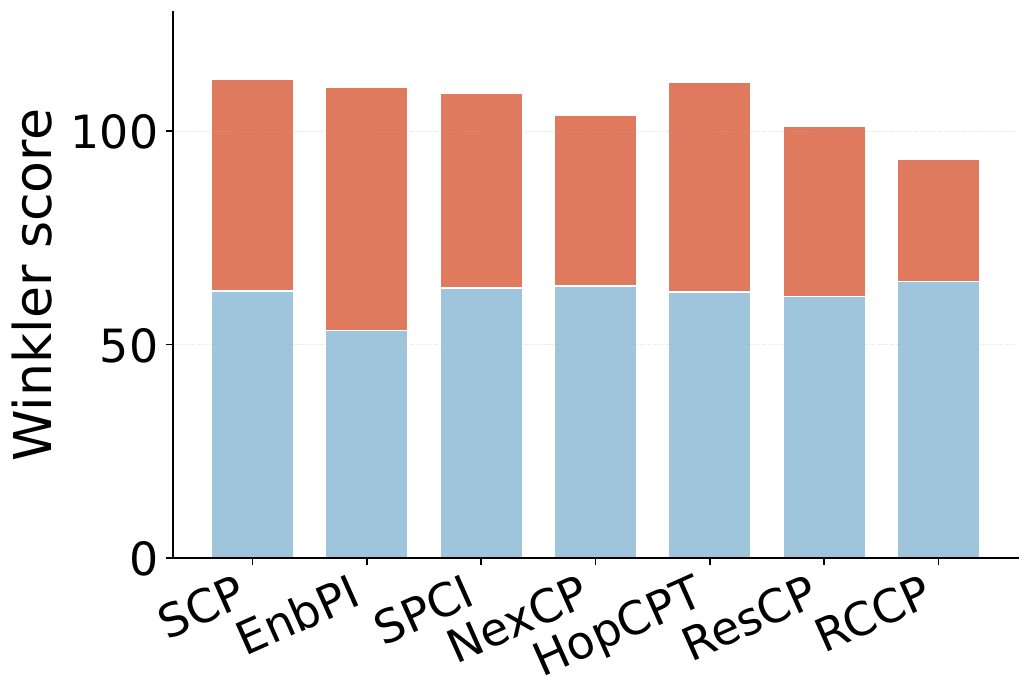}
        \vspace{-1.0em}
        \caption*{\small\textbf{(a) Air}}
    \end{minipage}
    \hfill
    \begin{minipage}[t]{0.485\linewidth}
        \centering
        \includegraphics[width=\linewidth]{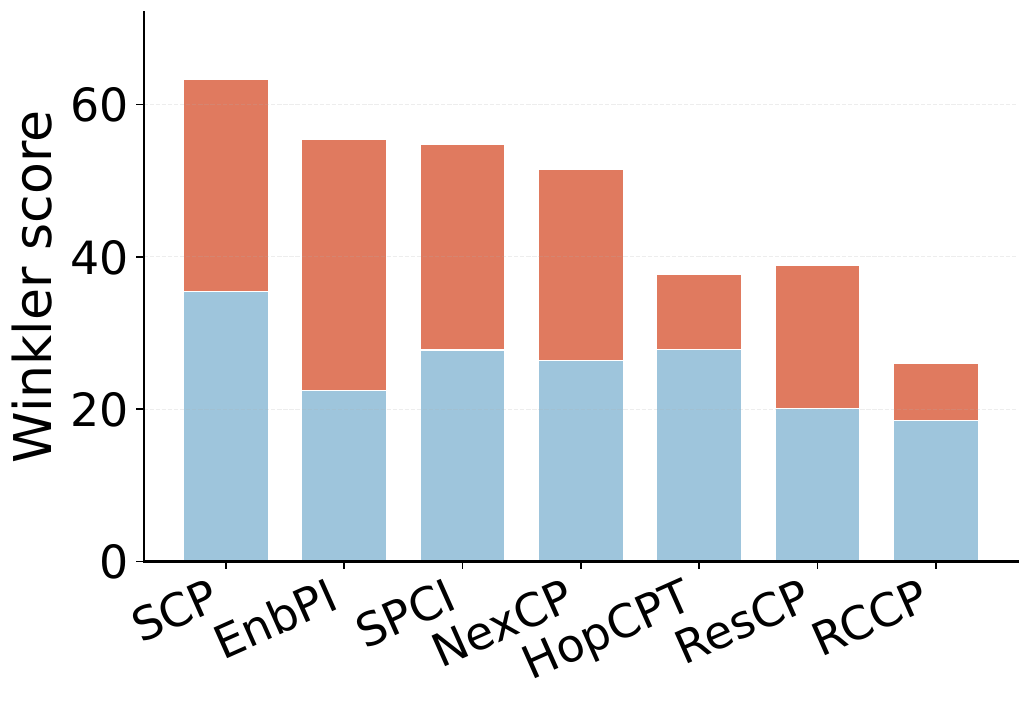}
        \vspace{-1.0em}
        \caption*{\small\textbf{(b) Solar}}
    \end{minipage}

    \vspace{-0em}
    \caption{Winkler-score decomposition into interval width and miss penalty on Air and Solar.}
    \label{fig:winkler_decomp}
\end{figure}

\begin{figure*}[t]
    \centering

    \includegraphics[width=0.65\textwidth]{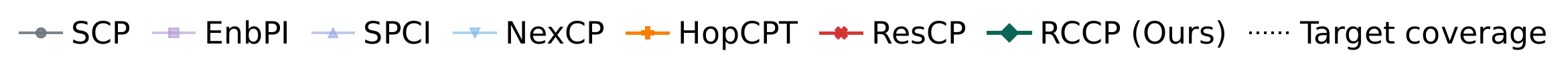}

    \begin{minipage}[t]{0.47\textwidth}
        \centering
        \includegraphics[width=\linewidth]{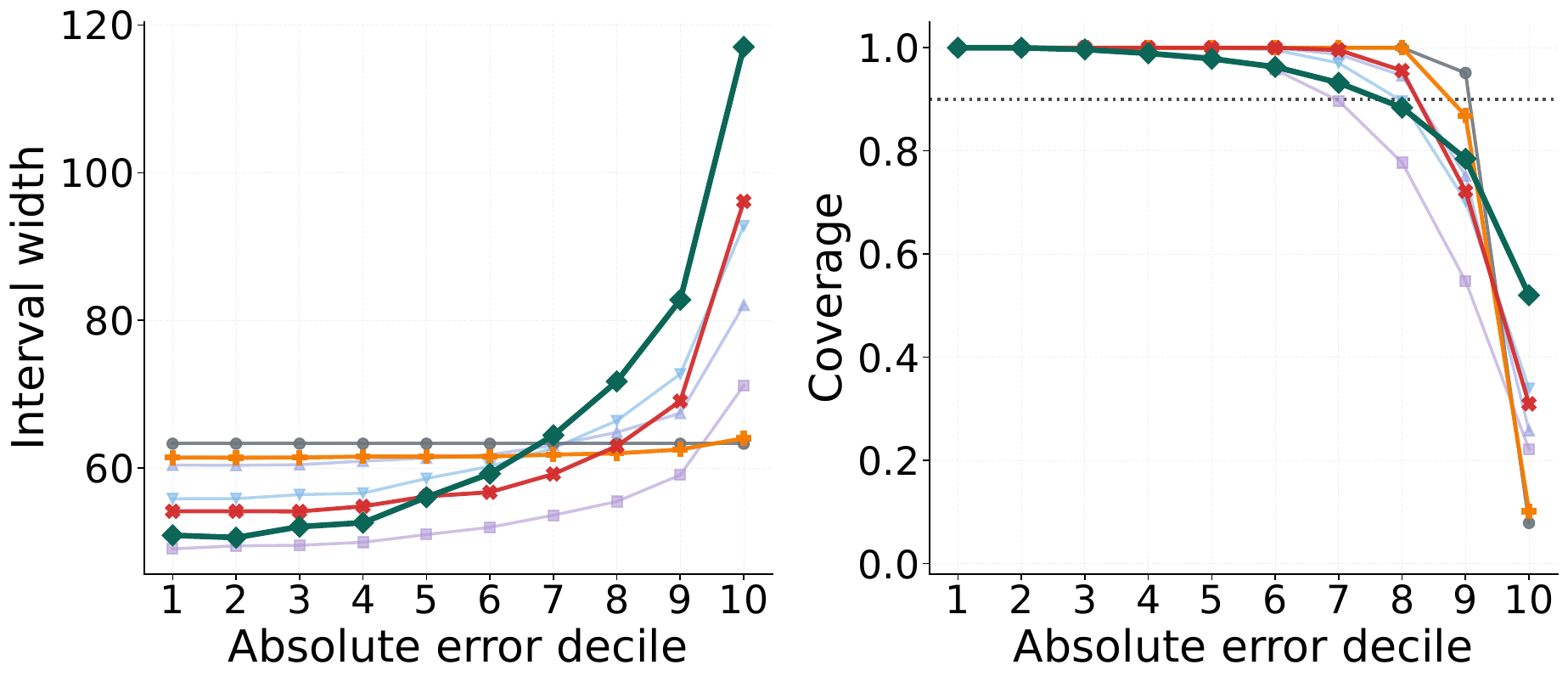}
        \vspace{-1.0em}
        
        {\small\textbf{(a) Air}}
    \end{minipage}
    \hfill
    \begin{minipage}[t]{0.47\textwidth}
        \centering
        \includegraphics[width=\linewidth]{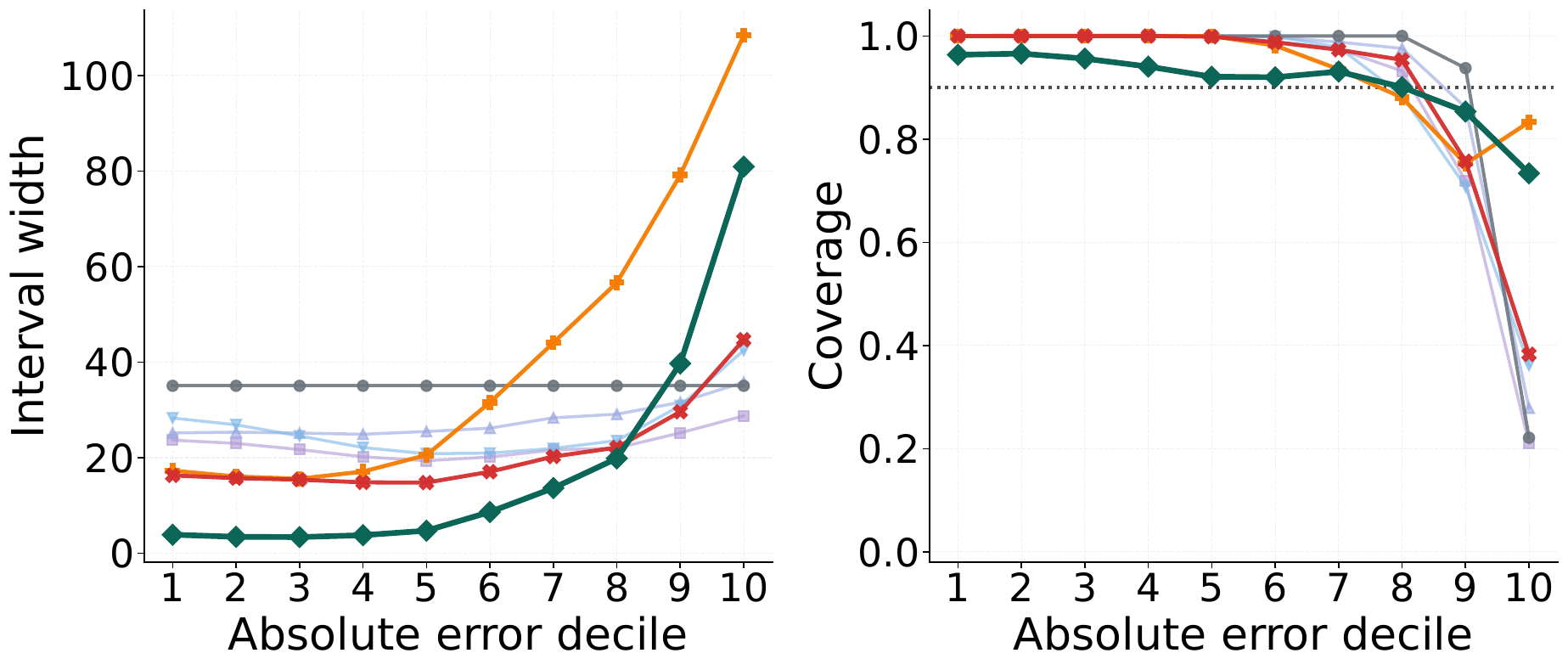}
        \vspace{-1.0em}
        
        {\small\textbf{(b) Solar}}
    \end{minipage}

    \caption{Interval width and empirical coverage stratified by realized error decile. Higher deciles correspond to larger realized errors and test whether intervals adapt to difficult points.}
    \label{fig:error_decile}
    \vspace{-5pt}
\end{figure*}

\subsection{Results \& Analysis}
\label{sec:main_results}

Across all datasets and backbone forecasters, Table~\ref{tab:main_results} shows that RCCP avoids empirical undercoverage at $\alpha=0.1$ and achieves the best or tied-best Winkler scores. Computationally simple baselines such as SCP and NexCP have low Calibration Time because they rely on global or weighted calibration rules, but this simplicity often comes with weaker coverage or larger Winkler scores. EnbPI and SPCI adapt to recent residuals, yet their intervals remain less effective when the error scale changes over time. HopCPT is closer in spirit to RCCP because it uses learned local information, but it requires substantially larger Calibration Time and shows variable interval quality across settings. These results support the main design choice of RCCP, where retrieval improves local interval allocation and correction prevents the retrieved interval from becoming an uncalibrated local threshold.



Calibration Time should be interpreted conservatively, since it is measured after forecasts are available and includes method-specific calibration and test interval construction. For RCCP, it also includes serializing the forecast-state keys used for retrieval, so the reported time includes retrieval memory construction. Although SCP and NexCP are computationally cheaper by construction, RCCP is more efficient than more expensive adaptive or learned local baselines while delivering stronger coverage and Winkler performance. Results for additional miscoverage levels are reported in Appendix~\ref{app:add_results}.

Winkler reflects interval length and coverage violations, so the analysis first decomposes this score. A lower Winkler score can result from shorter intervals, smaller violations, or both. Figure~\ref{fig:winkler_decomp} separates interval width from the penalty incurred by miscovered points. RCCP's advantage does not come from producing the narrowest intervals, but from reducing the miscoverage penalty. This indicates that targets outside the interval tend to be less severe violations, consistent with the correction mechanism that adjusts the retrieved interval scale without uniformly inflating intervals.

\vspace{0.5em}
\begin{table}[h]
\centering
\caption{Width adaptivity ratio across datasets. Values are Decile 10 width divided by Decile 1 width.}
\label{tab:width_adaptivity}
\scriptsize
\setlength{\tabcolsep}{1.7pt}
\renewcommand{\arraystretch}{1.08}
\resizebox{0.85\columnwidth}{!}{%
\begin{tabular}{@{}lccccccc@{}}
\toprule
Dataset & SCP & EnbPI & SPCI & NexCP & HopCPT & ResCP & RCCP \\
\midrule
Air & 1.00 & 1.45 & 1.36 & 1.66 & 1.04 & 1.77 & 2.30 \\
Solar & 1.00 & 1.22 & 1.42 & 1.50 & 6.27 & 2.74 & 20.86 \\
\bottomrule
\end{tabular}%
}
\end{table}
\vspace{4pt}

The decile analysis explains how the penalty is reduced. Figure~\ref{fig:error_decile} stratifies test points by realized error decile and compares interval width with empirical coverage. RCCP widens intervals in high-error deciles and maintains stronger coverage where prediction errors are largest. This pattern is summarized by the width adaptivity ratio in Table~\ref{tab:width_adaptivity}, defined as the Decile 10 width divided by the Decile 1 width. RCCP reaches 2.30 on Air and 20.86 on Solar, exceeding all baselines. Thus, RCCP reallocates interval width toward high-error regions rather than widening intervals uniformly.

\begin{figure}[h]
    \centering
    \includegraphics[width=0.95\linewidth]{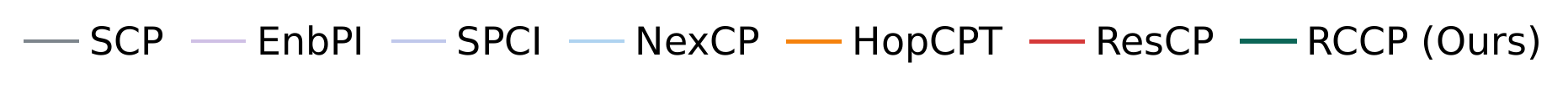}
    \vspace{-1pt}

    \begin{minipage}[t]{0.48\linewidth}
        \centering
        \includegraphics[width=\linewidth]{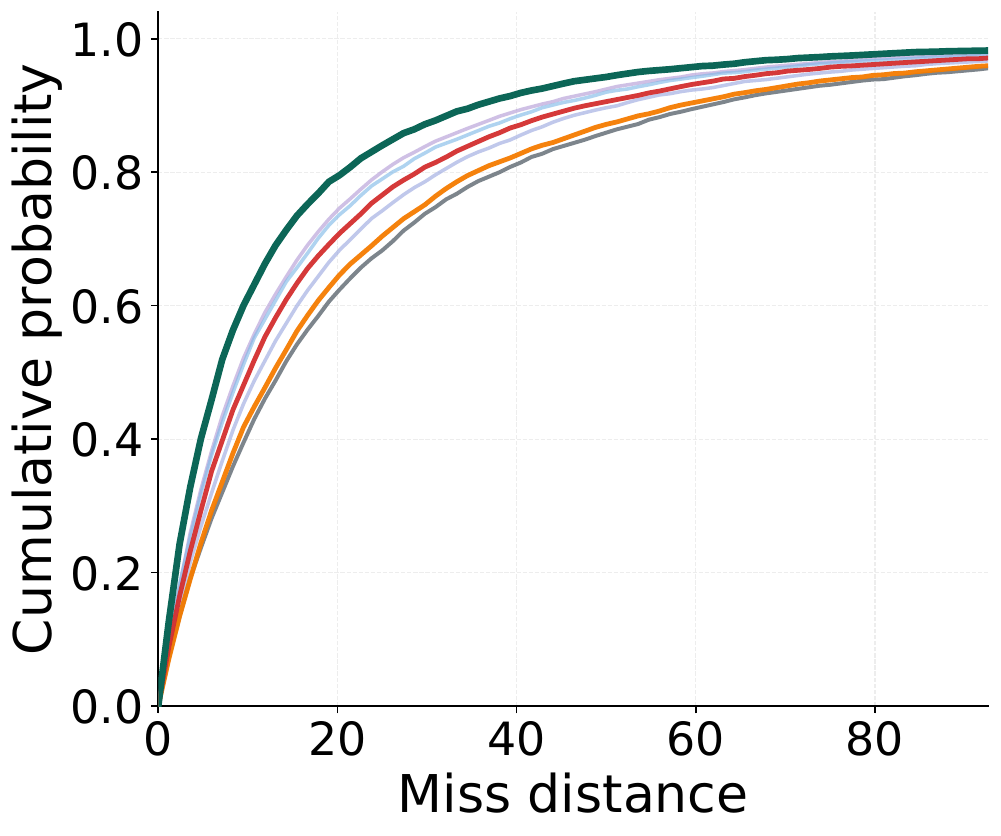}
        \vspace{-1.0em}
        \caption*{\small\textbf{(a) Air}}
    \end{minipage}
    \hfill
    \begin{minipage}[t]{0.48\linewidth}
        \centering
        \includegraphics[width=\linewidth]{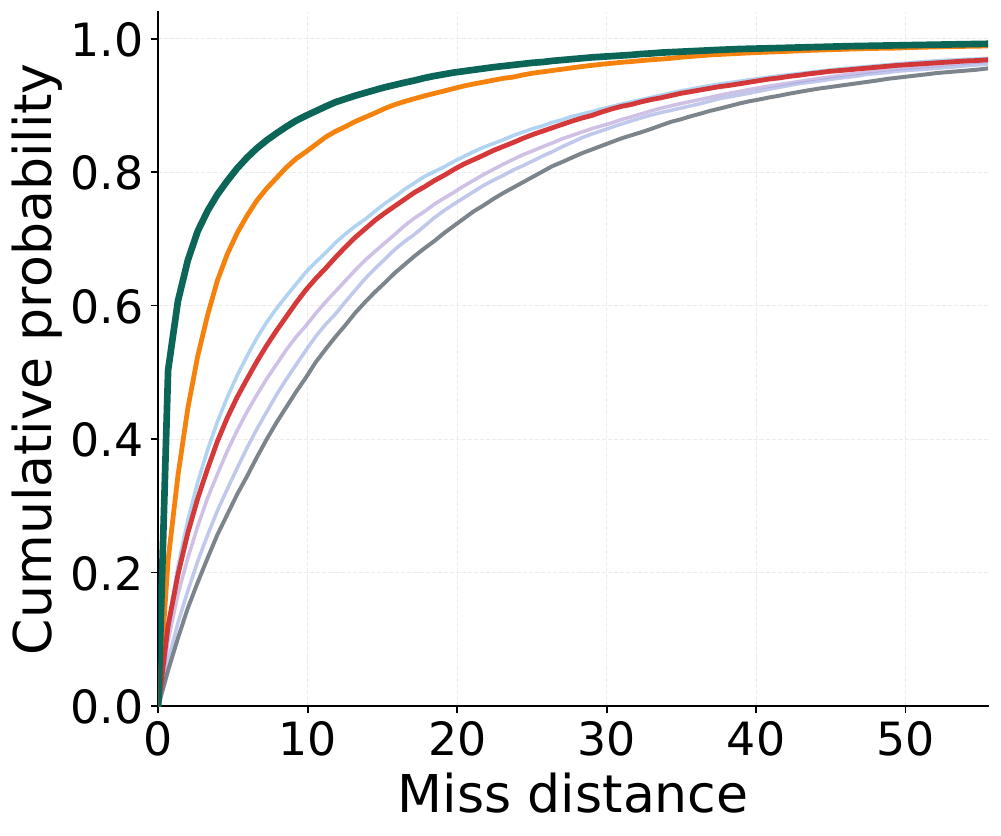}
        \vspace{-1.0em}
        \caption*{\small\textbf{(b) Solar}}
    \end{minipage}

    \vspace{-0em}
    \caption{CDF of miscoverage distance for uncovered test points. Higher curves indicate smaller coverage violations.}
    \label{fig:miss_distance}
\end{figure}

The distribution of violation sizes further supports this interpretation. Figure~\ref{fig:miss_distance} plots the CDF of the distance from each miscovered target to the nearest interval boundary, where curves concentrated near zero indicate smaller violations and heavier tails indicate severe misses. RCCP rises fastest across datasets, showing that even when coverage fails, the realized value usually remains close to the reported interval.

\begin{figure*}[t]
    \centering
    \includegraphics[width=0.32\textwidth]{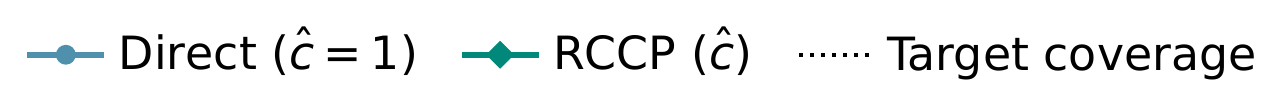}
    \vspace{-1mm}

    \begin{subfigure}[t]{0.49\textwidth}
        \centering
        \includegraphics[width=\linewidth]{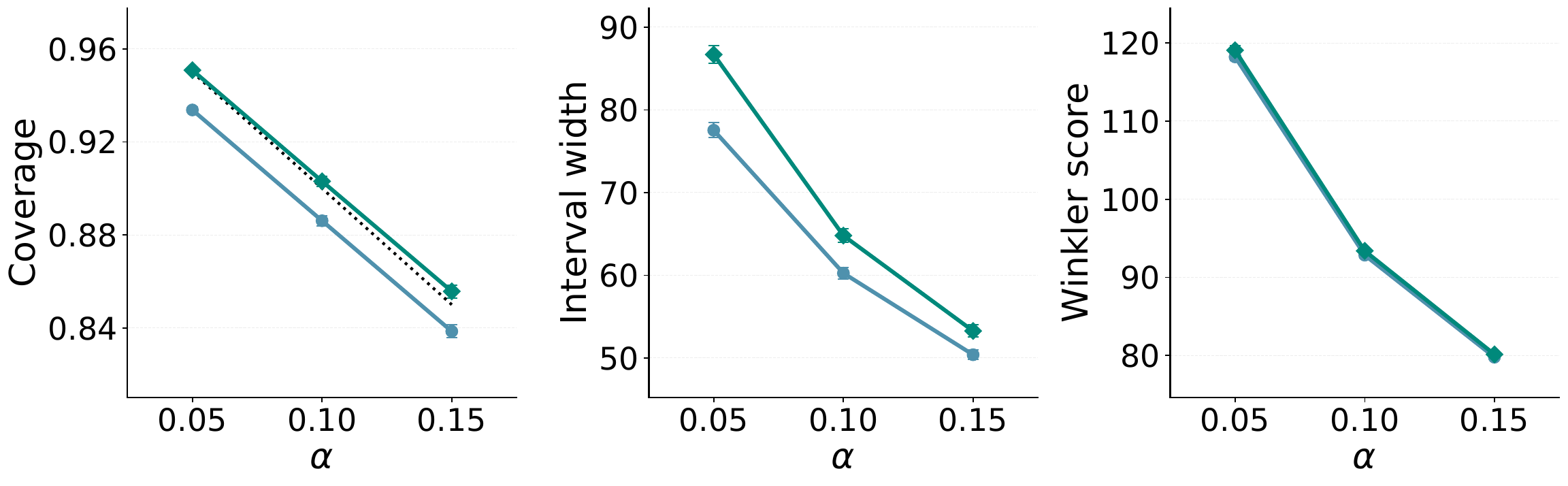}
        \caption{Air}
        \label{fig:correct_air}
    \end{subfigure}
    \hfill
    \begin{subfigure}[t]{0.49\textwidth}
        \centering
        \includegraphics[width=\linewidth]{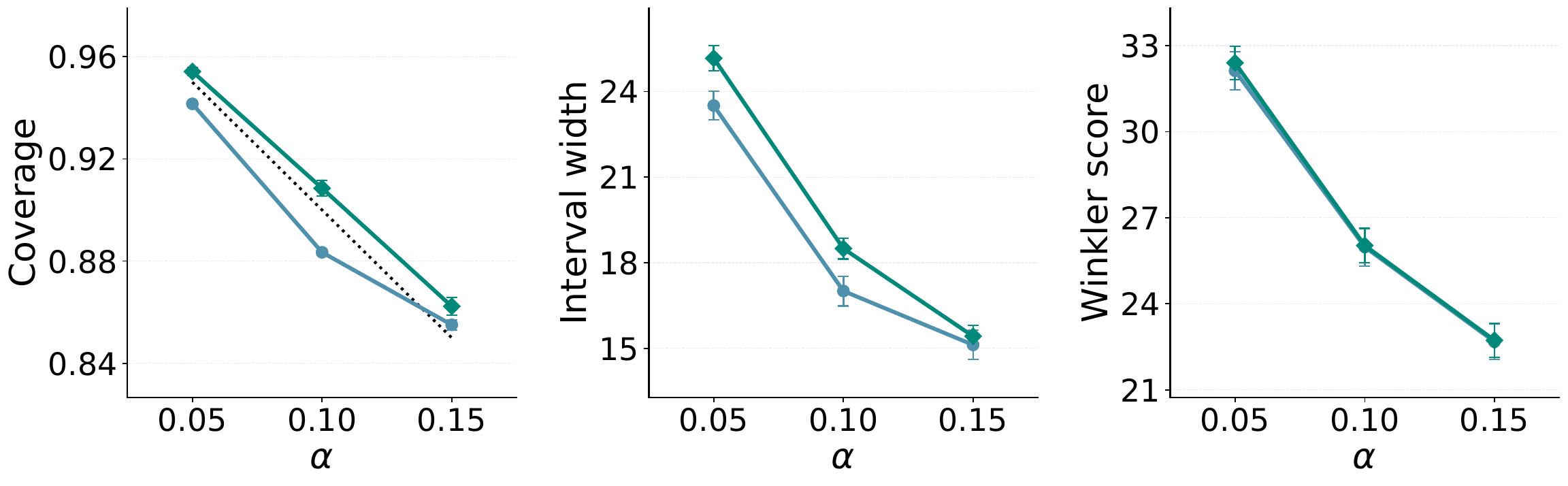}
        \caption{Solar}
        \label{fig:correct_solar}
    \end{subfigure}
    \caption{Coverage, interval width, and Winkler score for direct retrieval and corrected intervals across target miscoverage levels. The dotted line marks target coverage, showing how correction affects calibration and interval efficiency as $\alpha$ varies.}
    \label{fig:correct_alpha}
\end{figure*}

\subsection{Ablation Studies}
\label{sec:ablations}

The ablation study examines the interval components, correction across target miscoverage levels, retrieval key construction and metric, and retrieval size. These analyses clarify which design choices affect coverage, interval efficiency, and robustness.

\begin{table}[h]
\centering
\caption{Component ablation. The table isolates how asymmetric residuals and correction affect coverage and Winkler score.}
\label{tab:component_ablation}

\scriptsize
\setlength{\tabcolsep}{1.7pt}
\renewcommand{\arraystretch}{1.08}

\resizebox{0.9\columnwidth}{!}{%
\begin{tabular}{@{}lllcccc@{}}
\toprule
Dataset & Backbone & Metric
& Org. RCCP
& w/o Asym.
& w/o Corr.
& w/o Both \tabularnewline
\midrule

\multirow{4}{*}{Air}
& \multirow{2}{*}{LSTM}
& $\Delta$Cov
& $\mathbf{+0.43{\pm}0.16}$
& $+0.44{\pm}0.13$
& $-1.35{\pm}0.23$
& $-0.89{\pm}0.13$ \tabularnewline
& & Winkler
& $\mathbf{92.44{\pm}0.35}$
& $96.42{\pm}0.21$
& $92.91{\pm}0.32$
& $95.95{\pm}0.20$ \tabularnewline
\cmidrule(lr){2-7}

& \multirow{2}{*}{Transf.}
& $\Delta$Cov
& $\mathbf{+0.20{\pm}0.09}$
& $+0.49{\pm}0.23$
& $-3.01{\pm}0.10$
& $-2.60{\pm}0.20$ \tabularnewline
& & Winkler
& $\mathbf{92.51{\pm}1.44}$
& $97.26{\pm}1.04$
& $93.16{\pm}1.27$
& $97.63{\pm}1.02$ \tabularnewline

\specialrule{0.35pt}{0.8mm}{0.8mm}

\multirow{4}{*}{Solar}
& \multirow{2}{*}{LSTM}
& $\Delta$Cov
& $+0.42{\pm}0.24$
& $\mathbf{+0.03{\pm}0.22}$
& $-1.66{\pm}0.13$
& $-3.10{\pm}0.62$ \tabularnewline
& & Winkler
& $26.08{\pm}0.67$
& $28.17{\pm}1.35$
& $\mathbf{26.03{\pm}0.75}$
& $28.05{\pm}1.38$ \tabularnewline
\cmidrule(lr){2-7}

& \multirow{2}{*}{Transf.}
& $\Delta$Cov
& $+1.77{\pm}0.52$
& $+1.31{\pm}0.62$
& $\mathbf{+0.51{\pm}0.51}$
& $-2.31{\pm}0.62$ \tabularnewline
& & Winkler
& $\mathbf{29.29{\pm}1.70}$
& $35.93{\pm}3.94$
& $29.56{\pm}1.80$
& $35.85{\pm}3.98$ \tabularnewline

\bottomrule
\end{tabular}%
}
\end{table}

\paragraph{Component-wise ablation.}
Table~\ref{tab:component_ablation} separates the roles of the two design
choices introduced in Sections~\ref{sec:interval_retreival}
and~\ref{sec:retrieval_correction}: asymmetric residual scaling and conformal
correction. First, conformal correction mainly controls coverage. Removing it
worsens $\Delta$Cov and leads to undercoverage in most settings, since locally
retrieved residuals need not match the test-time normalized error distribution.
Second, asymmetric residual scaling mainly controls efficiency. Removing it
keeps coverage closer to the target but increases Winkler scores, because a
symmetric interval cannot adapt to directional error imbalance. One-sided
residuals avoid this waste by shaping the upper and lower widths separately.
Therefore, the results justify the full design: correction ensures reliable
coverage, while asymmetry improves interval efficiency.




\paragraph{Correction across target levels.}
The correction factor is evaluated across multiple target miscoverage levels. This comparison tests whether retrieval needs calibration after it has produced a locally adaptive interval. Figure~\ref{fig:correct_alpha} compares direct retrieval, which fixes $\widehat c=1$, with the corrected RCCP interval at $\alpha\in\{0.05,0.10,0.15\}$. Across target levels, correction moves empirical coverage toward the desired level with only modest changes in interval width and Winkler score. This indicates that the correction repairs residual coverage error left by retrieval rather than simply inflating intervals.

\vspace{4pt}
\begin{table}[h]
\centering
\caption{Retrieval key and metric ablation. The table compares embedding keys and raw-window keys across distance metrics.}

\label{tab:key_ablation}
\scriptsize
\setlength{\tabcolsep}{1.7pt}
\renewcommand{\arraystretch}{1.08}
\resizebox{0.9\columnwidth}{!}{%
\begin{tabular}{@{}lllcccc@{}}
\toprule
Dataset & Backbone & Metric
& Org. RCCP
& Emb.-Cos
& RawTS-L2
& RawTS-Cos \tabularnewline
\midrule

\multirow{4}{*}{Air}
& \multirow{2}{*}{LSTM}
& $\Delta$Cov
& $+0.43{\pm}0.16$
& $+0.37{\pm}0.18$
& $+0.21{\pm}0.21$
& $\mathbf{+0.08{\pm}0.22}$ \tabularnewline
& & Winkler
& $\mathbf{92.44{\pm}0.35}$
& $93.47{\pm}0.39$
& $96.30{\pm}0.77$
& $96.64{\pm}0.66$ \tabularnewline
\cmidrule(lr){2-7}

& \multirow{2}{*}{Transf.}
& $\Delta$Cov
& $+0.20{\pm}0.09$
& $+0.31{\pm}0.15$
& $\mathbf{-0.09{\pm}0.11}$
& $-0.22{\pm}0.11$ \tabularnewline
& & Winkler
& $\mathbf{92.51{\pm}1.44}$
& $92.62{\pm}1.30$
& $95.96{\pm}1.20$
& $97.40{\pm}1.63$ \tabularnewline

\specialrule{0.35pt}{0.8mm}{0.8mm}

\multirow{4}{*}{Solar}
& \multirow{2}{*}{LSTM}
& $\Delta$Cov
& $+0.42{\pm}0.24$
& $+0.75{\pm}0.23$
& $\mathbf{-0.03{\pm}0.19}$
& $+0.27{\pm}0.09$ \tabularnewline
& & Winkler
& $26.08{\pm}0.67$
& $\mathbf{26.06{\pm}0.69}$
& $28.42{\pm}0.76$
& $28.75{\pm}0.71$ \tabularnewline
\cmidrule(lr){2-7}

& \multirow{2}{*}{Transf.}
& $\Delta$Cov
& $+1.77{\pm}0.52$
& $+1.64{\pm}0.58$
& $\mathbf{+0.70{\pm}0.25}$
& $+0.89{\pm}0.26$ \tabularnewline
& & Winkler
& $29.29{\pm}1.70$
& $29.54{\pm}1.80$
& $\mathbf{29.05{\pm}0.84}$
& $29.49{\pm}0.81$ \tabularnewline

\bottomrule
\end{tabular}%
}
\end{table}

\paragraph{Retrieval key construction and metric.}
Table~\ref{tab:key_ablation} tests whether performance depends on retrieval key construction and the distance metric, separating retrieval choices from the correction step. The main RCCP setting is compared with embedding or raw-window keys under Euclidean distance or cosine similarity. Coverage remains stable across variants, suggesting that conformal correction is robust to retrieval choices. Winkler scores are more sensitive to key construction, with embedding keys generally yielding sharper corrected intervals.



\begin{figure}[h]
    \centering
    \includegraphics[
        width=0.35\linewidth
    ]{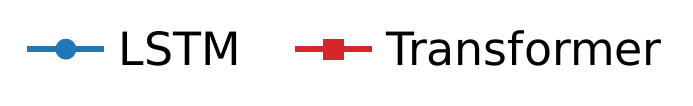}
    \vspace{-2pt}

    \begin{subfigure}[t]{0.48\linewidth}
        \centering
        \includegraphics[width=\linewidth]{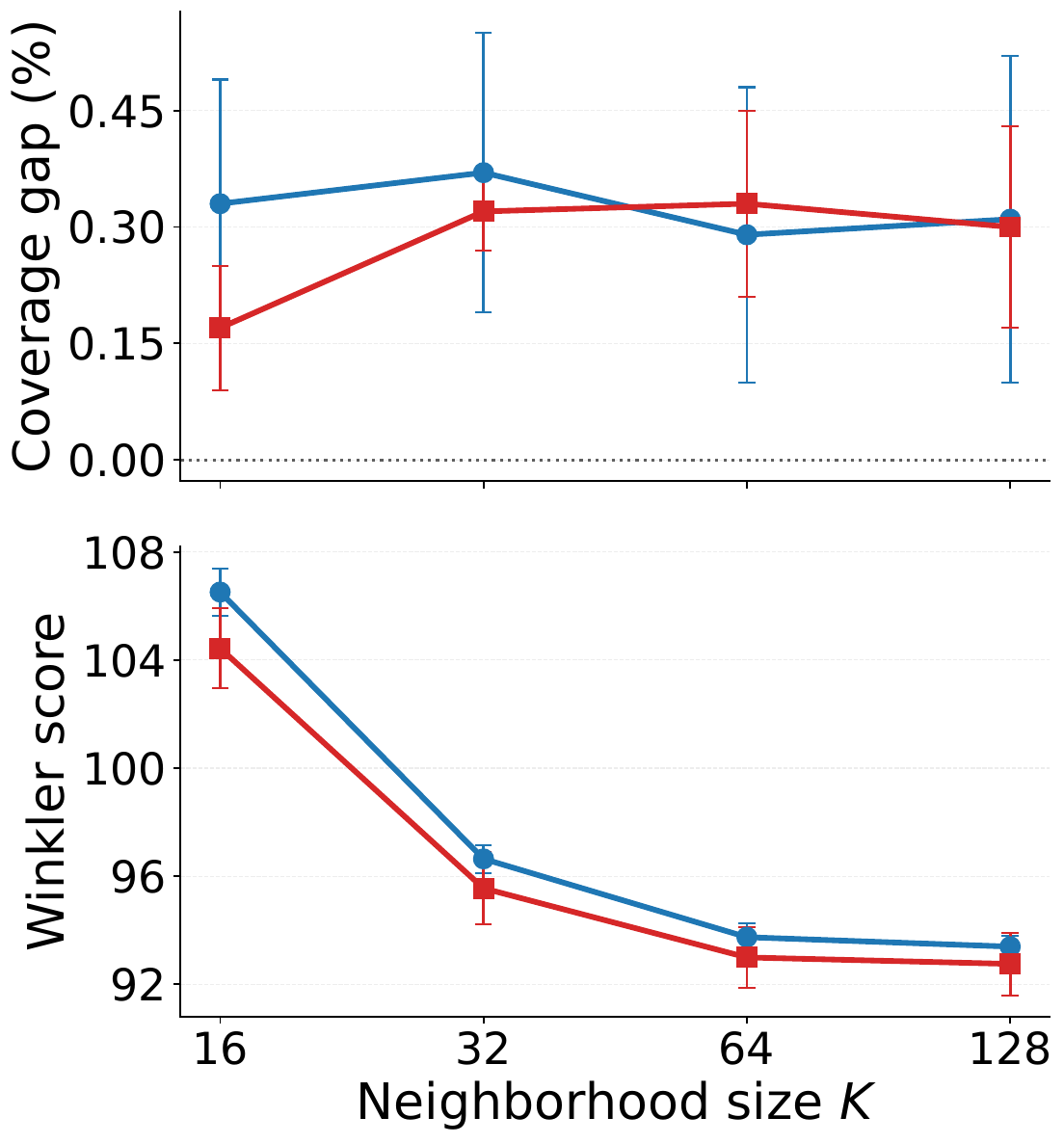}
        \caption{Air}
        \label{fig:k_sens_air}
    \end{subfigure}
    \hfill
    \begin{subfigure}[t]{0.48\linewidth}
        \centering
        \includegraphics[width=\linewidth]{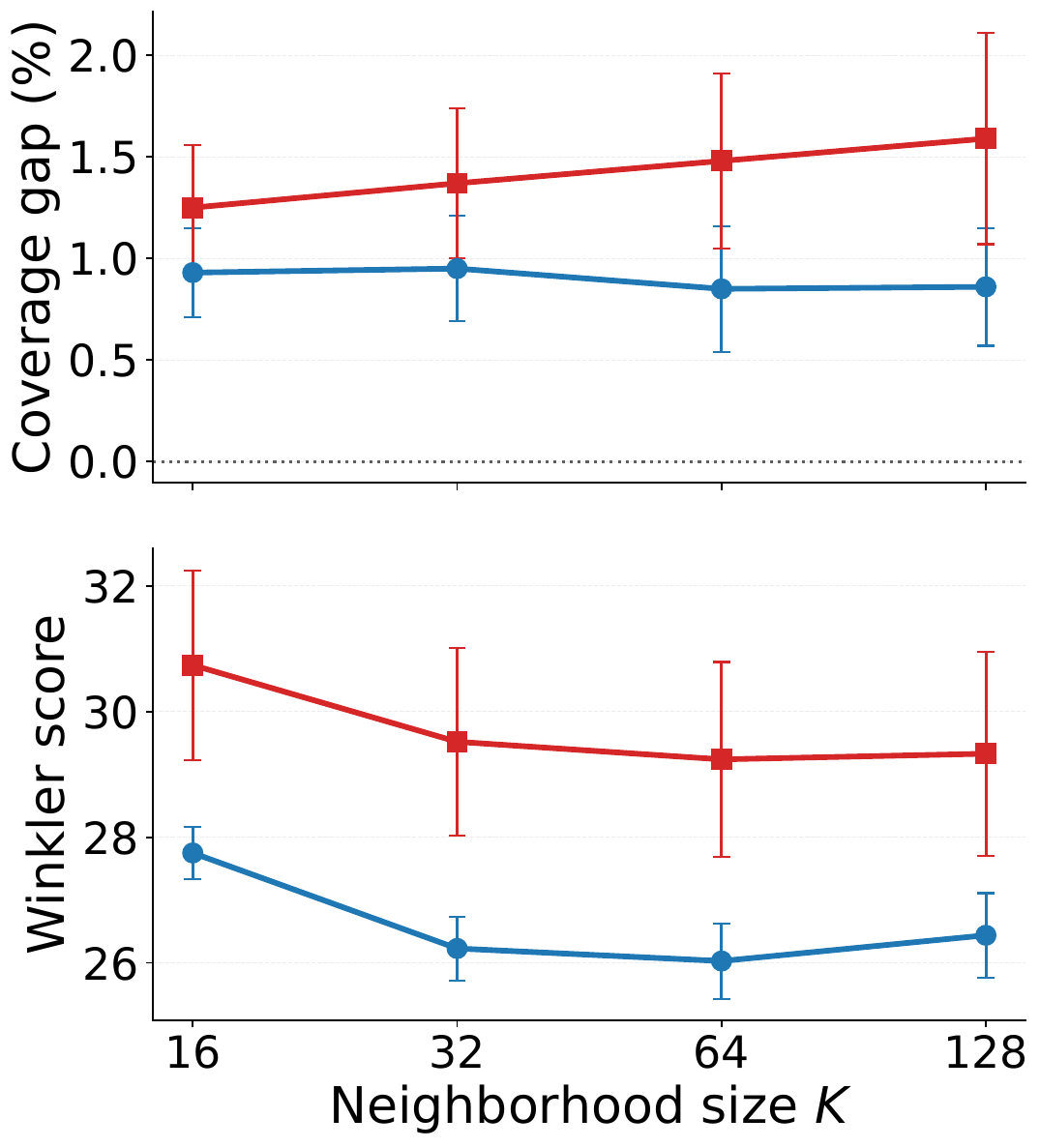}
        \caption{Solar}
        \label{fig:k_sens_solar}
    \end{subfigure}

    \caption{Sensitivity to retrieval size $K$. The figure reports interval width, coverage, and Winkler score across neighborhood sizes.}
    \label{fig:k_sens}
\end{figure}

\paragraph{Retrieval size sensitivity.}
The final ablation varies the neighborhood size $K$ and reports interval width, coverage, and Winkler score in Figure~\ref{fig:k_sens}. Small neighborhoods can make retrieved intervals variable, while very large neighborhoods dilute local information. Across a broad range of $K$, coverage remains stable and Winkler changes smoothly, suggesting that the conformal correction absorbs much of the retrieval variability. The default $K=64$ provides a stable balance between local adaptivity and interval quality.


\section{Conclusion}
In this paper we introduce RCCP, a retrieval--corrected calibration method for time series prediction intervals. Existing time series CP methods improve local calibration by reusing recent, weighted, or localized residuals, but broad weighting or additional adaptation procedures can make the interval less direct and more costly. RCCP addresses this issue by retrieving residuals from similar past prediction contexts and using one-sided residuals to construct an asymmetric local interval. Because the retrieved interval is not a coverage guarantee on its own, RCCP calibrates the remaining normalized retrieval error with a scalar conformal correction. This separates local evidence construction from the final coverage correction. The theoretical analysis characterizes this corrected interval through the normalized retrieval error distribution and gives a coverage gap bound with an asymptotic coverage result. Across benchmark datasets and backbone forecasters, RCCP maintained reliable empirical coverage, improved interval efficiency, reduced severe misses, and kept calibration overhead low. Ablations also showed that retrieval, asymmetry, and correction contribute complementary benefits. These results suggest that retrieval and correction provide a simple and scalable route to locally informative and calibrated uncertainty estimates for time series forecasting.

\section{Limitations}
A key limitation of RCCP is its dependence on the quality of the retrieval representation. Because the retrieval key is derived from the fixed forecaster and its representations, interval efficiency may degrade when the embedding does not capture error-relevant similarity. Future work should also study retrieval--corrected calibration for high-dimensional observations, multivariate targets, and multi-step horizons, where residuals are structured and uncertainty can vary across dimensions and horizons.


\section*{Generative AI Usage Disclosure}

Generative AI tools were used during the preparation of this manuscript for editorial assistance, literature organization, drafting support, and minor code-editing support. These tools were not used to generate datasets, produce experimental results, conduct statistical analyses, or determine scientific claims. All technical content, citations, algorithms, code, experimental results, and conclusions were reviewed and verified by the authors, who take full responsibility for the final manuscript.
\bibliographystyle{ACM-Reference-Format}
\bibliography{ref}


\appendix

\section{Proof}
\label{app:theory}

\setcounter{lemma}{0}
\setcounter{proposition}{0}
\setcounter{corollary}{0}
\setcounter{theorem}{0}

\begin{lemma}[Score-event equivalence]
\label{lem:score_event_equivalance}
For any \(c\ge 0\),
\[
    Y_t\in C_t(c)
    \quad\Longleftrightarrow\quad
    B_t\le c.
\]
\end{lemma}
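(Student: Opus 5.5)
The plan is to unfold both sides into statements about the signed residual $Y_t-\widehat Y_t$ and compare them case by case. Throughout, $\widetilde R_t^+>0$ and $\widetilde R_t^->0$, so that $B_t$ in Eq.~\eqref{eq:rccp_score} is well defined. If a retrieved scale could vanish, I would adopt the convention $a/0=+\infty$ for $a>0$ and $0/0=0$, and check that the argument below still goes through.

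By Eq.~\eqref{eq:rccp_interval}, $Y_t\in C_t(c)$ is equivalent to the pair of inequalities
\[
Y_t-\widehat Y_t\le c\,\widetilde R_t^+
\quad\text{and}\quad
\widehat Y_t-Y_t\le c\,\widetilde R_t^- .
\]
By definition of a maximum, $B_t\le c$ is equivalent to
\[
\frac{(Y_t-\widehat Y_t)_+}{\widetilde R_t^+}\le c
\quad\text{and}\quad
\frac{(\widehat Y_t-Y_t)_+}{\widetilde R_t^-}\le c .
\]
Since the denominators are positive, these become $(Y_t-\widehat Y_t)_+\le c\,\widetilde R_t^+$ and $(\widehat Y_t-Y_t)_+\le c\,\widetilde R_t^-$.

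It remains to show that for any real $x$ and any $a\ge 0$, we have $x\le a$ if and only if $x_+\le a$; here $a=c\widetilde R_t^\pm\ge0$ because $c\ge0$. If $x\ge0$, then $x_+=x$ and there is nothing to prove. If $x<0$, then $x\le a$ holds trivially, and $x_+=0\le a$ also holds. Applying this with $x=Y_t-\widehat Y_t$ and then with $x=\widehat Y_t-Y_t$ matches the two conditions term by term, which proves the equivalence.

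The only real obstacle is the degenerate case where a retrieved quantile equals zero, which happens if most retrieved one-sided residuals are zero. In that case, with the convention above, $(x)_+/0\le c$ holds exactly when $x_+=0$. This in turn holds exactly when $x\le 0=c\cdot 0$, so the equivalence survives. I would state this convention explicitly, or alternatively assume strictly positive retrieved scales, for instance by flooring them at a small $\varepsilon>0$.
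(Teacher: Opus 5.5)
Your proof is correct and follows the same route as the paper: unfold interval membership into the two one-sided inequalities $e_t^\pm\le c\,\widetilde R_t^\pm$, then divide by the positive retrieved scales to obtain $B_t\le c$. You are slightly more careful than the paper, which goes from $\widehat Y_t-c\widetilde R_t^-\le Y_t\le\widehat Y_t+c\widetilde R_t^+$ to the positive-part inequalities without justifying the step $x\le a\iff x_+\le a$ (for $a\ge 0$), and which simply asserts $\widetilde R_t^\pm>0$ instead of handling the zero-quantile case.
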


\begin{proof}
Fix any \(c\ge 0\). By the definition of the asymmetric prediction
interval,
\[
    Y_t\in C_t(c)
    \quad\Longleftrightarrow\quad
    \widehat Y_t-c\tilde R_t^-
    \le Y_t \le
    \widehat Y_t+c\tilde R_t^+ .
\]
Equivalently,
\[
    e_t^+ \le c\tilde R_t^+,
    \qquad
    e_t^- \le c\tilde R_t^- ,
\]
where \(e_t^+=(Y_t-\widehat Y_t)_+\) and
\(e_t^-=(\widehat Y_t-Y_t)_+\). Since \(\tilde R_t^+,\tilde R_t^->0\),

\noindent this is equivalent to
\[
    \max\left\{
        \frac{e_t^+}{\tilde R_t^+},
        \frac{e_t^-}{\tilde R_t^-}
    \right\}
    \le c .
\]

\noindent The left-hand side is precisely the normalized asymmetric score \(B_t\).
Therefore, the above event is equivalent to \(B_t\le c\).
\end{proof}

\vspace{0.5em}
\begin{theorem}[Coverage gap bound of RCCP]
\label{thm:rccp_coverage_gap}
Suppose Assumptions \ref{asm:stability}--\ref{asm:emp_acc} hold. Let
\[
    u_n:=\frac{2(e_n+r_n)}{m}.
\]
If \(u_n<\delta_0\), then
\[
    \left|
        \mathbb P_{\rm test}\{Y_t\in C_t(\widehat c)\}
        -
        q
    \right|
    \le
    \rho_n
    +
    \frac{2L}{m}(e_n+r_n).
\]
\end{theorem}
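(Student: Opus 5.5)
By Lemma~\ref{lem:score_event_equivalance}, the event $\{Y_t\in C_t(\widehat c)\}$ coincides with $\{B_t\le\widehat c\}$. Hence $\mathbb P_{\rm test}\{Y_t\in C_t(\widehat c)\}=F_{\rm test}(\widehat c)$, treating $\widehat c$ as fixed from calibration data. I will split the gap through $c^*$ using $F_{\rm cal}(c^*)=q$:
\[
\bigl|F_{\rm test}(\widehat c)-q\bigr|
\le \bigl|F_{\rm test}(\widehat c)-F_{\rm test}(c^*)\bigr|
+\bigl|F_{\rm test}(c^*)-F_{\rm cal}(c^*)\bigr|.
\]
Assumption~\ref{asm:stability} bounds the second term by $\rho_n$. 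Assumption~\ref{asm:lipschitz} bounds the first term by $L|\widehat c-c^*|$, provided $|\widehat c-c^*|\le\delta_{\rm test}$. So the whole proof reduces to showing $|\widehat c-c^*|\le u_n=2(e_n+r_n)/m$.

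\textbf{Localization of $\widehat c$.} This is the key step. Recall $\widehat c=\widehat F_{\rm cal}^{-1}(q_n)=\inf\{c:\widehat F_{\rm cal}(c)\ge q_n\}$. Set $u=u_n<\delta_0\le\delta_{\rm cal}$.

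For the upper bound, Assumption~\ref{asm:identifiability} gives
\[
\widehat F_{\rm cal}(c^*+u)\ge F_{\rm cal}(c^*+u)-e_n\ge q+mu-e_n=q+2(e_n+r_n)-e_n\ge q+r_n\ge q_n .
\]
Hence $\widehat c\le c^*+u$.

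For the lower bound, take any $c<c^*-u$. By monotonicity of $F_{\rm cal}$,
\[
\widehat F_{\rm cal}(c)\le F_{\rm cal}(c^*-u)+e_n\le q-mu+e_n=q-2r_n-e_n .
\]
If $e_n+r_n>0$, this is strictly below $q-r_n\le q_n$. So no such $c$ is in the defining set, and $\widehat c\ge c^*-u$.

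The degenerate case $e_n=r_n=0$ needs a separate check. Then $\widehat F_{\rm cal}=F_{\rm cal}$ and $q_n=q$, and the strict increase in Assumption~\ref{asm:identifiability} gives $\widehat c=c^*$ directly.

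The main obstacle I expect is this bookkeeping around the generalized inverse: right-continuity of $\widehat F_{\rm cal}$, and strict versus non-strict inequalities at the boundary. If it becomes fiddly, I would apply the lower-bound argument at $c^*-u'$ for every $u'\in(u,\delta_0)$ and let $u'\downarrow u$.

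\textbf{Conclusion.} We now have $|\widehat c-c^*|\le u_n<\delta_0\le\delta_{\rm test}$, so the Lipschitz step applies and gives $|F_{\rm test}(\widehat c)-F_{\rm test}(c^*)|\le Lu_n=\frac{2L}{m}(e_n+r_n)$. Combining this with the $\rho_n$ term yields the stated bound. Assumption~\ref{asm:emp_acc} guarantees $u_n\le\delta_0$; the strict hypothesis $u_n<\delta_0$ keeps us inside the neighborhood where Assumptions~\ref{asm:identifiability} and~\ref{asm:lipschitz} hold.
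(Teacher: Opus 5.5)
Your proof is correct and follows the paper's argument essentially step for step: you use the score-event lemma, the triangle inequality through $c^*$, the two-sided localization $|\widehat c-c^*|\le u_n$ via Assumption~\ref{asm:identifiability} and the sup-norm error $e_n$, and then the Lipschitz step. The only difference is in the lower bound: the paper obtains the strict inequality by evaluating at some $u\in(u_n,\min\{c^*-c,\delta_0\}]$, which is where it uses $u_n<\delta_0$, whereas you work at $u=u_n$ and treat the degenerate case $e_n=r_n=0$ separately; both routes are valid.
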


\begin{proof}
For the upper deviation,
\[
\begin{aligned}
    \widehat F_{\rm cal}(c^*+u_n)
    &\ge F_{\rm cal}(c^*+u_n)-e_n \\
    &\ge q+mu_n-e_n \\
    &= q+e_n+2r_n \\
    &\ge q_n .
\end{aligned}
\]
Hence \(\widehat c\le c^*+u_n\).

\noindent For the lower deviation, take any \(c<c^*-u_n\). Choose
\[
    u\in
    \left(
        u_n,\,
        \min\{c^*-c,\delta_0\}
    \right].
\]
Then \(c\le c^*-u\), and
\[
\begin{aligned}
    \widehat F_{\rm cal}(c)
    &\le F_{\rm cal}(c)+e_n \\
    &\le F_{\rm cal}(c^*-u)+e_n \\
    &\le q-mu+e_n \\
    &< q-mu_n+e_n \\
    &= q-e_n-2r_n \\
    &\le q_n .
\end{aligned}
\]
Thus no \(c<c^*-u_n\) reaches level \(q_n\), so
\[
    \widehat c\ge c^*-u_n.
\]
Combining the two inequalities yields
\[
    |\widehat c-c^*|
    \le
    u_n
    =
    \frac{2(e_n+r_n)}{m}.
\]

\noindent By Lemma~\ref{lem:score_event_equivalance},
\[
\begin{aligned}
 \left|
        \mathbb P_{\rm test}\{Y_t\in C_t(\widehat c)\}
        -
        q
    \right|
    &=
    |F_{\rm test}(\widehat c)-q| \\
    &\le
    |F_{\rm test}(\widehat c)-F_{\rm test}(c^*)|
    +
    |F_{\rm test}(c^*)-q| \\
    &\le
    L|\widehat c-c^*|+\rho_n \\
    &\le
    \rho_n
    +
    \frac{2L}{m}(e_n+r_n).
\end{aligned}
\]
\end{proof}

\vspace{1pt}
\begin{corollary}[Asymptotic coverage of RCCP]
\label{cor:rccp_asymptotic_coverage}
Suppose Assumptions 1--4 hold with fixed constants \(m>0\), \(L<\infty\),
and \(\delta_0>0\). If \(\rho_n\to0\), \(e_n\to0\), and \(r_n\to0\), then
\[
    \left|
        \mathbb P_{\rm test}\{Y_t\in C_t(\widehat c)\}
        -
        q
    \right|\to0.
\]
\end{corollary}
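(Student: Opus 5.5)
The corollary follows from Theorem~\ref{thm:rccp_coverage_gap} by a limiting argument. The only subtlety is that the theorem carries the side condition $u_n<\delta_0$, so I first show this holds for all large $n$, and then pass to the limit in the explicit bound.

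\textbf{Step 1 (side condition).} Since $m>0$ and $\delta_0>0$ are fixed and $e_n\to0$, $r_n\to0$, we have
\[
u_n=\frac{2(e_n+r_n)}{m}\to 0 .
\]
Hence there is an $N$ with $u_n<\delta_0$ for all $n\ge N$.

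\textbf{Step 2 (apply the theorem).} For $n\ge N$, all hypotheses of Theorem~\ref{thm:rccp_coverage_gap} are in force: Assumptions~\ref{asm:stability}--\ref{asm:emp_acc} hold by hypothesis, and $u_n<\delta_0$ by Step 1. The theorem then gives
\[
0\le \left|\mathbb P_{\rm test}\{Y_t\in C_t(\widehat c)\}-q\right|\le \rho_n+\frac{2L}{m}(e_n+r_n).
\]
The right-hand side tends to $0$, because $\rho_n\to0$, $e_n+r_n\to0$, and $2L/m$ is a fixed finite constant.

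\textbf{Step 3 (conclude).} By the squeeze theorem, $\left|\mathbb P_{\rm test}\{Y_t\in C_t(\widehat c)\}-q\right|\to0$.

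No real obstacle is expected. The only points needing care are that the constants $m$, $L$, $\delta_0$ do not depend on $n$, so the factor $2L/m$ stays bounded and $\delta_0$ is a fixed positive threshold, and that Step 1 is carried out before the theorem is invoked. If $e_n$ and $r_n$ are random, the same argument applies pathwise, or in probability by restricting to the event $\{u_n<\delta_0\}$, whose probability tends to one.
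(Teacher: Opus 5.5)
Your proposal is correct and follows the paper's proof exactly: you first use $e_n+r_n\to0$ to get $2(e_n+r_n)/m<\delta_0$ for all large $n$, then invoke the coverage-gap theorem and let the bound $\rho_n+\frac{2L}{m}(e_n+r_n)$ tend to zero. Your added remark on random $e_n,r_n$ goes slightly beyond the paper, which treats the probability conditionally on $\widehat c$ without further comment.
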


\begin{proof}
Since \(e_n+r_n\to0\), for all sufficiently large \(n\),
\[
    \frac{2(e_n+r_n)}{m}<\delta_0 .
\]
Therefore, Theorem~\ref{thm:rccp_coverage_gap} applies. Hence,
\[
    0\le
    \left|
        \mathbb P_{\rm test}\{Y_t\in C_t(\widehat c)\}
        -
        q
    \right|
    \le
    \rho_n+\frac{2L}{m}(e_n+r_n)
    \to0.
\]
\end{proof}

\section{Experimental Details}

\subsection{Hyperparameter Selection}
\label{app:hyper}
Following prior time series CP benchmarks~\cite{hopcpt,rescp}, model selection is performed on the validation split using the selection seed, and the selected configuration is reused for all 5 evaluation seeds. Table~\ref{tab:method_hyperparameters} summarizes the grids and fixed settings.

\vspace{3pt}
\begin{table}[h]
\centering
\caption{Hyperparameter grids and detailed settings for baselines.}
\label{tab:method_hyperparameters}
\scriptsize
\setlength{\tabcolsep}{3pt}
\renewcommand{\arraystretch}{1.12}
\begin{tabular}{@{}p{0.16\columnwidth}p{0.78\columnwidth}@{}}
\toprule
Method & Grid or fixed setting \tabularnewline
\midrule
SCP & No method-specific hyperparameter \tabularnewline
EnbPI & Past residual window $w\in\{10,50,100,150,200\}$ \tabularnewline
SPCI & Past residual window $w=100$ \tabularnewline
NexCP & Decay parameter $\rho\in\{0.9,0.95,0.99,0.999\}$ \tabularnewline
HopCPT & Learning rate $\eta\in\{10^{-2},10^{-3}\}$, dropout $p\in\{0,0.3\}$, positional encoding $\in\{\mathrm{none},\mathrm{relative}\}$ \tabularnewline
ResCP & $\rho_{\mathrm{ESN}}\in\{0.5,0.75,0.9,0.99,1.1,1.5\}$, $\ell\in\{0.5,0.7,0.8,0.9,0.95,1.0\}$\newline $s_{\mathrm{in}}\in\{0.1,0.25,0.5,0.75,1.0,2.0\}$, $\tau\in\{0.01,0.05,0.1,0.25,0.5,1.0,2.0\}$\newline $w\in\{250,500,1000,2000,4000,8000,\mathrm{all}\}$ \tabularnewline
RCCP & Neighborhood size $K\in\{32,64,128\}$ and retrieval temperature $\tau\in\{0.75,1.0,1.25\}$ \tabularnewline
\bottomrule
\end{tabular}
\end{table}

\vspace{3pt}
For most conformal prediction baselines, the calibration period is split evenly, with one half used for hyperparameter selection and the other half used to construct intervals during selection. ResCP follows its original validation protocol and uses 10\% of the calibration period for validation. For SPCI, fitting a quantile random forest at each time step is computationally limiting at our benchmark scale. We therefore use the implementation adopted in prior benchmarks with a past residual window of 100. HopCPT is trained with AdamW for up to 3,000 epochs and validated every 5 epochs. To keep runtime bounded, early stopping with patience 100 is used. For Air and Solar, ResCP uses the recommended settings from the original implementation.

\definecolor{stdgray}{gray}{0.50}
\definecolor{bestbg}{gray}{0.90}

\newcolumntype{M}{>{\valfont}c}
\newcolumntype{N}{>{\valfont}c}

\begin{table*}[t]
\centering
\caption{
Performance comparison across datasets and significance levels for the LSTM forecaster.
Values are mean $\pm$ standard deviation over random seeds.
For $\Delta$Cov, values closer to zero are better; lower is better for PI-Width, Winkler, and Time.
Shaded Winkler cells mark the lowest value, underlined Winkler values mark the second-lowest value, and shaded $\Delta_{\mathrm{Cov}}$ cells indicate severe undercoverage cases with $\Delta_{\mathrm{Cov}}<-2\%$.
}
\label{tab:alpha_results}

\small
\setlength{\tabcolsep}{0.85pt}
\renewcommand{\arraystretch}{0.98}

\begin{tabular*}{0.97\textwidth}{
@{}
>{\centering\arraybackslash}m{0.055\textwidth}
!{\hspace{3pt}\vrule width 0.35pt\hspace{5pt}}
c
@{\hspace{9pt}}
M
@{\extracolsep{\fill}}
*{7}{N}
@{}}
\toprule
{\headfont $\alpha$}
& {\headfont Dataset}
& {\headfont Metric}
& {\headfont SCP}
& {\headfont EnbPI}
& {\headfont SPCI}
& {\headfont NexCP}
& {\headfont HopCPT}
& {\headfont ResCP}
& {\headfont RCCP (Ours)} \\
\midrule

\multirow[c]{16}{*}{\rotatebox[origin=c]{90}{\large\textbf{$\alpha=0.05$}}}
& \multirow[c]{4}{*}{Air} & $\Delta$Cov & \ms{0.12}{0.02} & \gms{-5.45}{0.08} & \ms{-0.94}{0.50} & \ms{-1.09}{0.03} & \ms{-0.18}{0.08} & \ms{-0.35}{0.11} & \ms{0.18}{0.06} \\
& & PI-Width & \ms{89.99}{0.46} & \ms{70.59}{0.46} & \ms{86.72}{3.12} & \ms{87.79}{0.21} & \ms{89.19}{1.06} & \ms{84.23}{0.98} & \ms{80.57}{0.61} \\
& & Winkler & \ms{152.00}{0.59} & \ms{149.76}{0.35} & \ms{145.75}{0.92} & \ms{134.62}{0.23} & \ms{149.60}{1.31} & \ums{134.25}{0.58} & \bms{117.40}{0.36} \\
& & Time & \ms{36.41}{0.73} & \ms{145.17}{2.56} & \ms{664.27}{1.23} & \ms{35.47}{0.45} & \ms{264.84}{2.22} & \ms{100.78}{1.80} & \ms{71.08}{0.43} \\
\cmidrule(l){2-10}

& \multirow[c]{4}{*}{Solar} & $\Delta$Cov & \ms{0.71}{0.07} & \ms{-1.80}{0.02} & \ms{-0.22}{0.33} & \ms{-0.90}{0.03} & \ms{0.61}{0.91} & \ms{0.54}{0.04} & \ms{0.33}{0.10} \\
& & PI-Width & \ms{57.72}{1.96} & \ms{37.91}{1.29} & \ms{44.11}{2.72} & \ms{41.22}{1.18} & \ms{39.16}{10.21} & \ms{30.65}{1.52} & \ms{25.08}{0.30} \\
& & Winkler & \ms{88.49}{2.28} & \ms{78.73}{2.11} & \ms{77.89}{1.27} & \ms{70.43}{1.82} & \ums{50.26}{9.56} & \ms{53.32}{1.49} & \bms{32.65}{0.54} \\
& & Time & \ms{90.22}{1.29} & \ms{450.55}{3.71} & \ms{1964.12}{22.33} & \ms{98.87}{1.75} & \ms{869.47}{137.85} & \ms{318.53}{5.12} & \ms{167.64}{0.93} \\
\cmidrule(l){2-10}

& \multirow[c]{4}{*}{Wind} & $\Delta$Cov & \ms{0.25}{0.29} & \ms{-1.68}{0.15} & \gms{-2.66}{2.14} & \ms{-0.16}{0.11} & \ms{1.29}{0.30} & \ms{0.75}{0.11} & \ms{0.27}{0.18} \\
& & PI-Width & \ms{86.26}{1.72} & \ms{78.25}{0.25} & \ms{75.75}{10.87} & \ms{83.61}{0.21} & \ms{99.20}{5.51} & \ms{85.39}{0.50} & \ms{76.97}{1.22} \\
& & Winkler & \ms{120.47}{0.42} & \ms{122.76}{0.69} & \ms{131.00}{5.98} & \ms{118.67}{0.51} & \ms{125.34}{3.43} & \ums{111.03}{0.45} & \bms{102.66}{0.41} \\
& & Time & \ms{0.73}{0.03} & \ms{4.66}{0.12} & \ms{19.32}{0.34} & \ms{0.87}{0.03} & \ms{10.16}{1.92} & \ms{4.23}{0.28} & \ms{1.51}{0.01} \\
\cmidrule(l){2-10}

& \multirow[c]{4}{*}{Electricity} & $\Delta$Cov & \ms{3.33}{0.22} & \gms{-2.28}{0.19} & \ms{2.43}{0.96} & \ms{0.58}{0.22} & \ms{2.86}{0.54} & \ms{0.80}{0.27} & \ms{-0.36}{0.79} \\
& & PI-Width & \ms{0.30}{0.02} & \ms{0.18}{0.01} & \ms{0.27}{0.03} & \ms{0.21}{0.01} & \ms{0.28}{0.02} & \ms{0.22}{0.01} & \ms{0.21}{0.01} \\
& & Winkler & \ms{0.32}{0.02} & \ums{0.29}{0.02} & \ms{0.31}{0.02} & \bms{0.28}{0.02} & \ms{0.32}{0.02} & \ums{0.29}{0.02} & \ums{0.29}{0.02} \\
& & Time & \ms{0.21}{0.01} & \ms{1.36}{0.06} & \ms{4.40}{0.07} & \ms{0.25}{0.00} & \ms{4.36}{0.23} & \ms{1.90}{0.43} & \ms{0.23}{0.00} \\

\midrule

\multirow[c]{16}{*}{\rotatebox[origin=c]{90}{\large\textbf{$\alpha=0.15$}}}
& \multirow[c]{4}{*}{Air} & $\Delta$Cov & \ms{0.07}{0.20} & \gms{-6.68}{0.12} & \ms{-0.89}{1.00} & \ms{-0.88}{0.11} & \ms{-0.25}{0.20} & \ms{-0.00}{0.28} & \ms{0.56}{0.21} \\
& & PI-Width & \ms{49.49}{0.39} & \ms{43.83}{0.35} & \ms{49.97}{1.88} & \ms{51.51}{0.28} & \ms{49.46}{0.63} & \ms{49.43}{0.46} & \ms{49.32}{0.45} \\
& & Winkler & \ms{93.42}{0.49} & \ms{91.87}{0.49} & \ms{91.16}{0.25} & \ms{87.60}{0.23} & \ms{92.89}{0.57} & \ums{85.43}{0.38} & \bms{79.49}{0.22} \\
& & Time & \ms{37.97}{1.28} & \ms{146.43}{2.89} & \ms{667.99}{8.91} & \ms{35.69}{0.44} & \ms{265.47}{1.66} & \ms{99.54}{2.34} & \ms{70.89}{0.13} \\
\cmidrule(l){2-10}

& \multirow[c]{4}{*}{Solar} & $\Delta$Cov & \ms{2.02}{0.08} & \ms{-1.66}{0.10} & \ms{1.43}{0.64} & \ms{-0.62}{0.10} & \ms{1.60}{3.21} & \ms{1.53}{0.19} & \ms{1.15}{0.21} \\
& & PI-Width & \ms{23.01}{0.42} & \ms{14.83}{0.66} & \ms{18.31}{1.73} & \ms{18.50}{0.60} & \ms{23.97}{6.13} & \ms{15.06}{0.87} & \ms{15.18}{0.32} \\
& & Winkler & \ms{48.69}{1.11} & \ms{42.87}{1.36} & \ms{42.72}{1.15} & \ms{41.03}{1.12} & \ms{32.97}{5.25} & \ums{31.73}{1.08} & \bms{22.74}{0.59} \\
& & Time & \ms{90.88}{1.44} & \ms{450.11}{5.88} & \ms{1980.33}{15.93} & \ms{98.42}{1.32} & \ms{835.57}{72.30} & \ms{321.50}{3.83} & \ms{168.72}{0.77} \\
\cmidrule(l){2-10}

& \multirow[c]{4}{*}{Wind} & $\Delta$Cov & \ms{0.16}{0.20} & \gms{-2.17}{0.30} & \ms{-0.65}{2.05} & \ms{0.14}{0.17} & \ms{3.38}{2.56} & \ms{2.05}{0.25} & \ms{0.90}{0.23} \\
& & PI-Width & \ms{51.13}{0.73} & \ms{48.62}{0.44} & \ms{48.71}{4.76} & \ms{50.90}{0.51} & \ms{57.87}{6.37} & \ms{50.86}{0.52} & \ms{48.57}{0.74} \\
& & Winkler & \ms{83.04}{0.13} & \ms{83.67}{0.25} & \ms{84.32}{0.65} & \ms{81.50}{0.17} & \ms{83.02}{2.39} & \ums{78.58}{0.27} & \bms{73.03}{0.22} \\
& & Time & \ms{0.72}{0.01} & \ms{4.54}{0.01} & \ms{19.20}{0.04} & \ms{0.86}{0.01} & \ms{9.02}{0.23} & \ms{4.01}{0.23} & \ms{1.51}{0.01} \\
\cmidrule(l){2-10}

& \multirow[c]{4}{*}{Electricity} & $\Delta$Cov & \ms{5.83}{1.60} & \gms{-2.54}{0.70} & \ms{4.13}{1.32} & \ms{1.74}{0.43} & \ms{5.00}{1.09} & \ms{0.69}{0.63} & \ms{0.22}{1.05} \\
& & PI-Width & \ms{0.17}{0.01} & \ms{0.13}{0.01} & \ms{0.15}{0.01} & \ms{0.14}{0.01} & \ms{0.16}{0.01} & \ms{0.14}{0.01} & \ms{0.13}{0.01} \\
& & Winkler & \ms{0.21}{0.01} & \ms{0.21}{0.02} & \ms{0.20}{0.01} & \ms{0.20}{0.01} & \ms{0.21}{0.02} & \ums{0.20}{0.01} & \bms{0.19}{0.01} \\
& & Time & \ms{0.19}{0.00} & \ms{1.21}{0.06} & \ms{4.53}{0.13} & \ms{0.24}{0.01} & \ms{4.00}{0.38} & \ms{1.58}{0.05} & \ms{0.23}{0.01} \\

\bottomrule
\end{tabular*}

\vspace{-1.0mm}
\end{table*}

\section{Additional Results}
\label{app:add_results}

\subsection{Additional Miscoverage Levels}
\label{app:alpha_results}
Table~\ref{tab:alpha_results} reports results at $\alpha=0.05$ and $\alpha=0.15$ using the global LSTM forecaster. This evaluation checks whether the correction remains reliable beyond the main $\alpha=0.1$ setting. RCCP achieves the best Winkler score on the most datasets at both target levels and keeps coverage gaps close to the nominal level, with gaps that are often smaller than those in the main setting.

The results show that RCCP is not tuned to a single miscoverage level. As $\alpha$ changes, baseline methods show larger fluctuations in Winkler score because interval width and coverage violations move together. RCCP remains stable across target levels, maintaining calibrated coverage while producing intervals with competitive width.

\vspace{-5pt}
\subsection{Coverage and Efficiency Tradeoff}
\label{app:pareto_results}
Figure~\ref{fig:pareto_avg} summarizes the main $\alpha=0.1$ results averaged over the four datasets by plotting Winkler score against $|\Delta_{\mathrm{Cov}}|$ and Calibration Time. Methods near the lower-left region achieve stronger joint performance across interval quality, coverage, and computation.

This frontier view highlights the practical value of RCCP, showing that retrieval-corrected calibration resolves the central tradeoff by producing sharper calibrated intervals without the computational burden of heavier adaptive baselines.

\begin{figure}[t]
    \centering
    \includegraphics[
        width=0.72\linewidth,
    ]{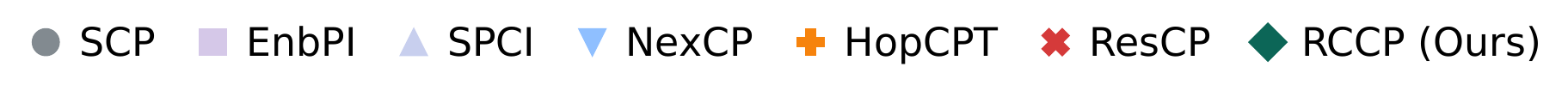}

    \begin{subfigure}[t]{0.485\linewidth}
        \centering
        \includegraphics[width=\linewidth]{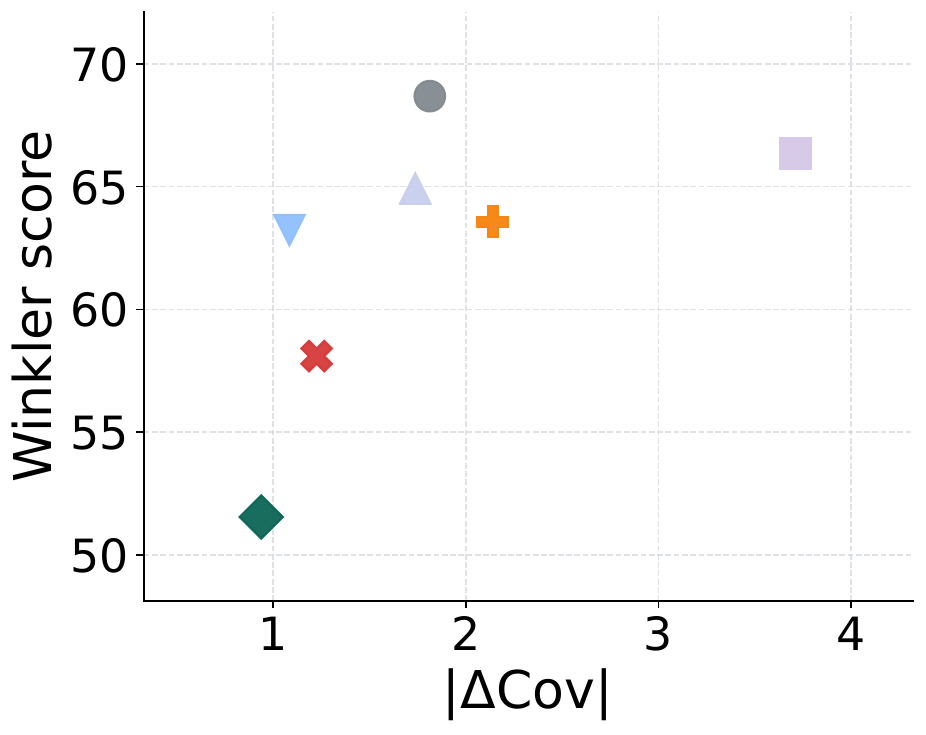}
        \caption{$|\Delta_{\mathrm{Cov}}|$ and Winkler}
        \label{fig:pareto_cov_winkler}
    \end{subfigure}
    \hfill
    \begin{subfigure}[t]{0.485\linewidth}
        \centering
        \includegraphics[width=\linewidth]{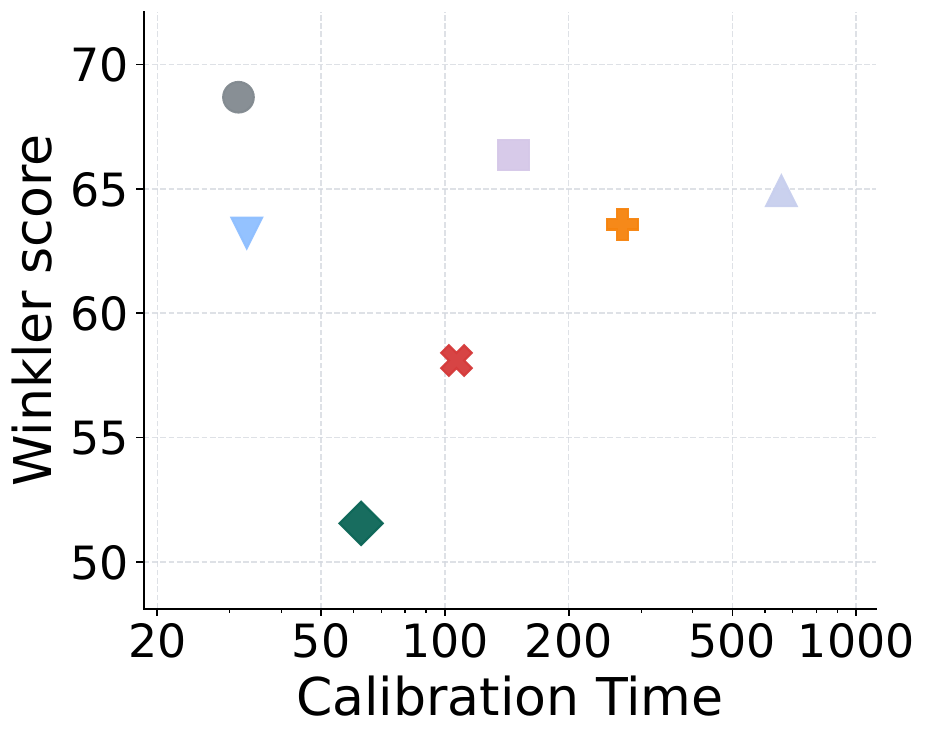}
        \caption{Calibration Time and Winkler}
        \label{fig:pareto_time_winkler}
    \end{subfigure}

    \caption{Average tradeoff across four datasets at $\alpha=0.1$. The plots compare Winkler score with $|\Delta_{\mathrm{Cov}}|$ and Calibration Time.}
    \label{fig:pareto_avg}
\end{figure}

\end{document}